\DeclareMathOperator*{\argmax}{argmax}
\DeclareMathOperator*{\st}{s.t.}
\def \R {\mathbb{R}}
\def \x {\mathbf{x}}
\def \E {\mathrm{E}}
\def \F {\mathcal{F}}
\def \B {\mathcal{B}}
\def \y {\mathbf{y}}
\def \w {\mathbf{w}}
\def \wh {\widehat{\w}}
\def \D {\mathcal{D}}
\def \C {\mathcal{C}}
\def \B {\mathcal{B}}
\def \W {\mathcal{W}}
\def \g {\mathbf{g}}
\def \fb {\bar{f}}
\def \O {\widetilde{O}}
\newtheorem{thm}{Theorem}
\newtheorem{cor}[thm]{Corollary}
\begin{document}

\title{Online Stochastic Linear Optimization under \\ One-bit Feedback}

\author{\name Lijun Zhang \email zhanglj@lamda.nju.edu.cn\\
       \addr National Key Laboratory for Novel Software Technology\\
       Nanjing University, Nanjing 210023, China
       \AND
       \name Tianbao Yang \email tianbao-yang@uiowa.edu\\
       \addr Department of Computer Science\\
       the University of Iowa, Iowa City, IA 52242, USA
       \AND
       \name Rong Jin \email rongjin@cse.msu.edu\\
       \addr Department of Computer Science and Engineering\\
        Michigan State University, East Lansing, MI 48824, USA
       \AND
       \name Zhi-Hua Zhou \email zhouzh@lamda.nju.edu.cn\\
       \addr National Key Laboratory for Novel Software Technology\\
       Nanjing University, Nanjing 210023, China}

\editor{Leslie Pack Kaelbling}

\maketitle

\begin{abstract}
In this paper, we study a special bandit setting of online stochastic linear optimization, where only one-bit of information is revealed to the learner at each round. This problem has found many applications including online advertisement and online recommendation.  We assume the binary feedback is a random variable generated from the \emph{logit} model, and aim to minimize the regret defined by the unknown linear function. Although the existing method for generalized linear bandit can be applied to our problem, the high computational cost makes it impractical for real-world problems. To address this challenge, we develop an efficient online learning algorithm by exploiting particular structures of the observation model. Specifically, we adopt online Newton step to estimate the unknown parameter and derive a tight confidence region based on the exponential concavity of the logistic loss. Our analysis shows that the proposed algorithm achieves a regret bound of $\O(d\sqrt{T})$, which matches the optimal result of stochastic linear bandits.
\end{abstract}

\begin{keywords}
bandit, online, regret bound, stochastic linear optimization, logit model
\end{keywords}

\section{Introduction}
Online learning with bandit feedback plays an important role in several industrial domains, such as ad placement, website optimization, and packet routing~\citep{Bandit:suvery}. A canonical framework for studying this problem is the multi-armed bandits (MAB), which models the situation that a gambler must choose which of $K$ slot machines to play~\citep{Robbins:52}. In the basic stochastic MAB, each arm is assumed to deliver rewards that are drawn from a fixed but unknown distribution.  The goal of the gambler is to minimize the regret, namely the difference between his expected cumulative reward and that of the best single arm in hindsight~\citep{Auer:2002:FAM}.
Although MAB is a powerful framework for modeling online decision problems, it becomes intractable when the number of arms is very large or even infinite. To address this challenge, various algorithms have been designed to exploit different structure properties of the reward function, such as Lipschitz~\citep{Kleinberg:2008:MBM} and convex~\citep{Flaxman:2005:OCO,Stochastic_Bandit}. Among them, stochastic linear bandits (SLB) has received considerable attentions during the past decade~\citep{Auer:UCB,Linear:Bandit:08,NIPS2011:LSB}. In each round of SLB, the learner is asked to choose an action $\x_t$ from a decision set $\D \in \R^d$, then he observes $y_t$ such that
\begin{equation} \label{eqn:obr:1}
\E[y_t|\x_t]= \x_t^\top  \w_*,
\end{equation}
where $\w_* \in\R^d$ is a vector of unknown parameters. The goal of learner is to minimize the (pseudo) regret
\begin{equation} \label{eqn:reg}
T \max_{\x \in \D} \x^\top\w_* - \sum_{t=1}^T \x_t^\top \w_*.
\end{equation}

In this paper, we consider a special bandit setting of online linear optimization where the feedback $y_t$ only contains one-bit of information. In particular, $y_t \in \{\pm 1\}$. Our setting is motivated from the fact that in many real-world applications, such as online advertising and recommender systems, user feedback (e.g., click or not) is usually binary. Since the feedback is binary-valued, we assume it is generated according to the logit model~\citep{The-Elements-of-Statistical-Learning-2009}, i.e.,
\begin{equation} \label{eqn:obr:2}
\Pr[y_t= \pm 1|\x_t] = \frac{1}{1+\exp(-y_t \x_t^\top \w_*)}=\frac{\exp(y_t \x_t^\top \w_*)}{1+\exp(y_t \x_t^\top \w_*)}.
\end{equation}
Without loss of generality, suppose $1$ is the preferred outcome. Then, it is natural to define the regret in terms of the expected times that $1$ is observed, i.e.,
\begin{equation} \label{eqn:log:reg}
T \max_{\x \in \D} \frac{\exp(\x^\top \w_*)}{1+\exp(\x^\top \w_*)} - \sum_{t=1}^T \frac{\exp( \x_t^\top \w_*)}{1+\exp( \x_t^\top \w_*)}.
\end{equation}
The observation model in (\ref{eqn:obr:2}) and the nonlinear regret in (\ref{eqn:log:reg}) can be treated as a special case of the Generalized Linear Bandit (GLB)~\citep{NIPS2010}. However, the existing algorithm for GLB is inefficient in the sense that: i) it is not a truly online algorithm since the whole learning history is stored in memory and used to estimate $\w_*$; and ii) it is limited to the case that the number of arms is finite because an upper bound for each arm needs to be calculated explicitly in each round.

The main contribution of this paper is an efficient online learning algorithm that effectively exploits particular structures of the logit model.  Based on the analytical properties of the logistic function, we first show that the linear regret defined in (\ref{eqn:reg}) and the nonlinear regret in (\ref{eqn:log:reg}) only differs by a constant factor, and then focus on minimizing the former one due to its simplicity. Similar to previous studies~\citep{Bandit:suvery},  we follow the principle of ``optimism in face of uncertainty'' to deal with the exploration-exploitation dilemma. The basic idea is to maintain a confidence region for $\w_*$, and choose an estimate from the confidence region and an action so that the linear reward is maximized. Thus, the problem reduces to the construction of the confidence region from one-bit feedback that satisfies (\ref{eqn:obr:2}). Based on the exponential concavity of the logistic loss, we propose to use a variant of the online Newton step~\citep{ML:Hazan:2007} to find the center of the confidence region and derive its width by a rather technical analysis of the updating rule. Theoretical analysis shows that our algorithm achieves a regret bound of $\O(d\sqrt{T})$,\footnote{We use the $\O$ notation to hide constant factors as well as polylogarithmic factors in $d$ and $T$.} which matches the result for SLB~\citep{Linear:Bandit:08}. Furthermore, we provide several strategies to reduce the computational cost of the proposed algorithm.
\section{Related Work} \label{sec:rel}
The stochastic multi-armed bandits (MAB)~\citep{Robbins:52},  has become the canonical formalism for studying the problem of decision-making under uncertainty. A long line of successive problems have been extensively studied in statistics~\citep{Bandit:book} and computer science~\citep{Bandit:suvery}.

\subsection{Stochastic Multi-armed Bandits (MAB)}
In their seminal paper, \citet{Lai1985} establish an asymptotic lower bound of $O(K \log T)$  for the expected cumulative regret over $T$ periods, under the assumption that the expected rewards of the best and second best arms are well-separated. By making use of \emph{upper confidence bounds} (UCB), they further construct policies which achieve the lower bound asymptotically. However, this initial algorithm is quite involved, because the computation of UCB relies on the entire sequence of rewards obtained so far.  To address this limitation, \citet{MAB:Agr:95} introduces a family of simpler policies that only needs to calculate the sample mean of rewards, and the regret retains the optimal logarithmic behavior. A finite time analysis of stochastic MAB is conducted by \citet{Auer:2002:FAM}. In particular, they propose a UCB-type algorithm based on the Chernoff-Hoeffding bound, and demonstrate it achieves the optimal logarithmic regret uniformly over time.

\subsection{Stochastic Linear Bandits (SLB)}
SLB is first studied by \citet{Auer:UCB}, who considers the case $\D$ is finite. Although an elegant UCB-type algorithm
named LinRel is developed, he fails to bound its regret due to independence issues. Instead, he designs a complicated master algorithm which uses LinRel as a subroutine, and achieves a regret bound of $\O((\log|\D|)^{3/2}  \sqrt{T d} )$, where $|\D|$ is the number of feasible decisions. In a subsequent work, \citet{Linear:Bandit:08} generalize LinRel slightly so that it can be applied in settings where $\D$ may be infinite. They refer to the new algorithm as ConfidenceBall$_2$, and show it enjoys a bound of $\O(d \sqrt{T})$, which does not depend on the cardinality of $\D$. Later, \citet{NIPS2011:LSB} improve the theoretical analysis of ConfidenceBall$_2$ by employing tools from the self-normalized processes. Specifically, the worst case bound is improved by a logarithmic factor and the constant is improved.

\subsection{ConfidenceBall$_2$}
To facilitate comparisons, we give a brief description of the ConfidenceBall$_2$ algorithm~\citep{Linear:Bandit:08}. In each round, the algorithm maintains a confidence region $\C_t$ such that with a high probability $\w_* \in \C_t$. Then, the algorithm finds the greedy optimistic decision
\[
\x_t = \argmax_{\x \in \D} \max_{\w \in \C_t} \x^\top \w.
\]
After submitting $\x_t$ to the oracle, the algorithm receives $y_t$ that satisfies (\ref{eqn:obr:1}). Given the past action-feedback pairs $(\x_1,y_1), \ldots (\x_t,y_t)$, the confidence region $\C_{t+1}$ is constructed as follows. The center of $\C_{t+1}$ is found by minimizing the $\ell_2$-regularized square loss, i.e.,
\[
\w_{t+1} = \argmax_{\w}  \sum_{i=1}^t (\x_i^\top \w-y_i)^2 + \lambda \|\w\|_2^2.
\]
Notice that $\w_{t+1}$ can be computed efficiently in an online fashion. Let $A_{t+1} = \lambda I + \sum_{i=1}^t \x_i \x_i^\top$. Based on the self-normalized bound for vector-valued martingales~\citep{NIPS2011:LSB}, the width of $\C_{t+1}$ can be characterized by
\[
(\w - \w_{t+1} )^\top A_{t+1} ( \w - \w_{t+1} ) \leq \delta_{t+1}
\]
for some constant $\delta_{t+1} >0$. As can be seen, the above procedure for constructing the confidence region is specially designed for the observation model in (\ref{eqn:obr:1}), and thus cannot be applied to the model in (\ref{eqn:obr:2}).

\subsection{Generalized Linear Bandit (GLB)}
\citet{NIPS2010} extend SLB to the nonlinear case based on the Generalized Linear Model framework of statistics. In the so-called  GLB model, $y_t$ is assumed to satisfy $\E[y_t|\x_t]= \mu(\x_t^\top  \w_*)$ where $\mu: \R \mapsto \R$ is certain link function. The regret is also defined in terms of $\mu(\cdot)$ and given by
\begin{equation} \label{eqn:reg:2}
T \max_{\x \in \D} \mu(\x^\top\w_*) - \sum_{t=1}^T \mu(\x_t^\top \w_*).
\end{equation}
Note that by setting $\mu(x)=\exp(x)/[1+\exp(x)]$, the problem considered in this paper becomes a special case of GLB. A UCB-type algorithm has been proposed for GLB and also achieves a regret bound of $\O(d \sqrt{T})$. Different from ConfidenceBall$_2$ which constructs a confidence region in the parameter space, the algorithm of \citet{NIPS2010} operates only in the reward space. However, the space and time complexities of that algorithm in the $t$-th iteration are $O(t)$ and  $O(t+|\D|)$, respectively. The $O(t)$ factor comes from the fact it needs to store the past action-feedback pairs $(\x_1,y_1), \ldots (\x_{t-1},y_{t-1})$ and use all of them to estimate $\w_*$. The $O(|\D|)$ factor is due to the fact it needs to calculate an upper bound for each arm in order to decide the next action $\x_t$.

\subsection{Adversarial Setting}
All the results mentioned above are under the stochastic setting, where the reward of each arm is generated from a unknown but \emph{fixed} distribution. A more general setting is the adversarial case, in which the reward from each arm may change arbitrary~\citep{Bandit:suvery}. The most well-known method for the adversarial multi-armed bandits is the Exp3 algorithm, which achieves a regret bound of $\O(\sqrt{KT})$~\citep{Auer:2003:NMB}.  The problem of adversarial linear bandits has been extensively studied, and the start-of-the-art regret bound is $\O(poly(d)\sqrt{T})$~\citep{Price:Bandit,Competing:Dark,Minimax:Linear}. For more results, please refer to \citet{Bandit:suvery}, \citet{Com:Bandit} and references therein.

\subsection{Bandit Learning with One-bit Feedback}
There are several new variants of bandit learning that also rely on one one-bit feedback, such as multi-class bandits~\citep{Kakade:2008:EBA,ICML14:Multiclass} and $K$-armed dueling bandits~\citep{COLT:Duel:2009,ICML14:Duel:Card}. For example, in multi-class bandits, the feedback is whether the predicted label is correct or not, and in $K$-armed dueling bandits, the feedback is the comparison between the rewards from two arms. However, none of them are designed for online linear optimization.

\subsection{One-bit Compressive Sensing (CS)}
Finally, we would like to discuss one closely related work in signal processing---one-bit Compressive Sensing (CS)~\citep{1bit:Boufounos,OneBit:Plan:Robust}. One-bit CS aims to recover a sparse vectors $\w_*$ from a set of one-bit measurements $\{y_i\}$ where $y_i$ is generated from $\x_i^\top \w_*$ according to certain observation model such as (\ref{eqn:obr:2}).  The main difference is that one-bit CS is studied in batch setting with the goal to minimize the recovery error, while our problem is studied in online setting with the goal to minimize the regret.

\section{Online Learning for Logit Model (OL$^2$M)}
We first describe the proposed algorithm for online stochastic linear optimization given one-bit feedback, next compare it with existing methods, then state its theoretical guarantees, and finally discuss implementation issues.
\subsection{The Algorithm}
For a positive definite matrix $A\in\R^{d\times d}$, the weighted $\ell_2$-norm is defined by $\|\x\|_A^2=\x^\top A \x$. Without loss of generality, we assume the decision space $\D$ is contained in the unit ball, that is,
\begin{equation} \label{eqn:assump}
\|\x\|_2 \leq 1, \ \forall \x \in \D.
\end{equation}
We further assume the $\ell_2$-norm of $\w_*$ is upper bounded by some constant $R$, which is known to the learner. Our first observation is that the linear regret in (\ref{eqn:reg}) and the nonlinear regret in (\ref{eqn:log:reg}) only differs by a constant factor as indicated below.
\begin{lemma} \label{lem:eqiv:regret} We have
\begin{equation} \label{eqn:eqiv:regret}
  \frac{1}{2(1+\exp(R))} (\ref{eqn:reg}) \leq (\ref{eqn:log:reg}) \leq   \frac{1}{4} (\ref{eqn:reg})
\end{equation}
\end{lemma}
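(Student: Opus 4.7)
The plan is to view both regrets as sums of differences of the sigmoid $\sigma(z)=\exp(z)/(1+\exp(z))$ (respectively, the identity) evaluated at $\x^{*\top}\w_*$ and $\x_t^\top \w_*$, and then relate the two via the derivative $\sigma'$ through the mean value theorem.

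First I would observe that since $\sigma$ is strictly increasing, $\argmax_{\x\in\D}\x^\top\w_*$ and $\argmax_{\x\in\D}\sigma(\x^\top\w_*)$ coincide; call this common maximizer $\x^*$ and write $a^* = \x^{*\top}\w_*$, $a_t=\x_t^\top\w_*$. Then (\ref{eqn:reg}) equals $\sum_{t=1}^T(a^*-a_t)$ and (\ref{eqn:log:reg}) equals $\sum_{t=1}^T[\sigma(a^*)-\sigma(a_t)]$, with every summand in each sum nonnegative. By the mean value theorem, for each $t$ there exists $\xi_t$ between $a_t$ and $a^*$ with
\[
\sigma(a^*)-\sigma(a_t) = \sigma'(\xi_t)\,(a^*-a_t).
\]
By assumption (\ref{eqn:assump}) and $\|\w_*\|_2\le R$, Cauchy--Schwarz gives $|a^*|,|a_t|\le R$, so $\xi_t\in[-R,R]$.

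Next, for the upper bound I would use the standard fact that $\sigma'(z)=\sigma(z)(1-\sigma(z))$ is maximized at $z=0$ with value $1/4$, which immediately gives $\sigma(a^*)-\sigma(a_t)\le \tfrac14(a^*-a_t)$ and yields the right-hand inequality of (\ref{eqn:eqiv:regret}) after summation. For the lower bound, $\sigma'$ is even and decreasing on $[0,\infty)$, so its minimum on $[-R,R]$ is attained at $\pm R$ with value $\sigma'(R)=\exp(R)/(1+\exp(R))^2$. A one-line simplification shows
\[
\frac{\exp(R)}{(1+\exp(R))^2} \;\ge\; \frac{1}{2(1+\exp(R))} \quad\Longleftrightarrow\quad \frac{\exp(R)}{1+\exp(R)}\ge \tfrac12,
\]
and the latter holds because $\exp(R)\ge 1$ (we may assume $R\ge 0$). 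Hence $\sigma'(\xi_t)\ge \frac{1}{2(1+\exp(R))}$, which gives $\sigma(a^*)-\sigma(a_t)\ge\frac{1}{2(1+\exp(R))}(a^*-a_t)$ and, after summation, the left-hand inequality of (\ref{eqn:eqiv:regret}).

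There is no real obstacle here: the entire argument is a pointwise mean-value-theorem sandwich, and the only step that requires any care is verifying that the global minimum of $\sigma'$ on the interval $[-R,R]$ produced by the boundedness assumptions is indeed at least $1/[2(1+\exp(R))]$, which reduces to the trivial inequality $\exp(R)\ge 1$.
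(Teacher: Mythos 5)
Your proof is correct and follows essentially the same route as the paper's: both arguments reduce the claim to the two-sided bound $\frac{1}{2(1+\exp(R))} \le \mu'(x) \le \frac14$ on $[-R,R]$ and apply it termwise (the paper via $\mu(b)-\mu(a)=\int_a^b\mu'$, you via the mean value theorem, which is an immaterial difference). Your explicit verification that $\exp(R)/(1+\exp(R))^2 \ge 1/[2(1+\exp(R))]$ reduces to $\exp(R)\ge 1$ is a nice touch the paper leaves implicit.
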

In the following, we will develop an efficient algorithm that minimizes the linear regret, which in turn  minimizes the nonlinear regret as well.

\begin{algorithm}[t]
\caption{Online Learning for Logit Model (OL$^2$M)}
\begin{algorithmic}[1]
\STATE {\bf Input:} Step Size $\eta$, Regularization Parameter $\lambda$
\STATE $Z_1=\lambda I$, $\w_1=0$
\FOR{$t=1,2,\ldots$} \label{stp:1}
\STATE
\[
(\x_t,\wh_t) = \argmax_{\x \in \D, \w \in \C_t} \x^\top \w
\]
\STATE Submit $\x_t$ and observe $y_t \in \{ \pm 1 \}$
\STATE Solve the optimization problem in (\ref{eqn:update}) to find $\w_{t+1}$
\ENDFOR
\end{algorithmic}
\label{alg:1}
\end{algorithm}

The algorithm is motivated as follows. Suppose actions $\x_1,\ldots,\x_t$ have been submitted to the oracle, and let $y_1,\ldots,y_t$ be the one-bit feedback from the oracle.
To approximate $\w_*$, the most straightforward way is to find the maximum likelihood estimator by solving the following logistic regression problem
\[
\min_{\|\w\|_2 \leq R} \frac{1}{t} \sum_{i=1}^{t} \log \left(1+\exp(-y_i \x_i^\top \w) \right).
\]
However, this approach does not scale well since it requires the leaner to store the entire learning history. Instead, we propose an online algorithm to find an approximate solution. The key observation is that the logistic loss
\[
f_t(\w)= \log \left( 1 + \exp(-y_t \x_t^\top \w )\right)
\]
is exponentially concave over bounded domain~\citep{LR:hazan}, which motives us to apply a variant of the online Newton step~\citep{ML:Hazan:2007}. Specifically, we propose to find an approximate solution $\w_{t+1}$ by solving the following problem
\begin{equation} \label{eqn:update}
\min_{\|\w\|_2 \leq R} \frac{\|\w-\w_t\|_{Z_{t+1}}^2}{2} + \eta (\w-\w_t)^\top \nabla f_t(\w_t)
\end{equation}
where $\eta>0$ is the step size,
\begin{equation}\label{eqn:Zt}
Z_{t+1} = Z_t + \frac{\eta \beta}{2} \x_t \x_t^\top,
\end{equation}
and $\beta$ is defined in (\ref{eqn:beta}). Although our updating rule is similar to the method in~\citep{ML:Hazan:2007}, there also exist some differences. As indicated by (\ref{eqn:Zt}), in our case $\x_t \x_t^\top$ is used to approximate the Hessian matrix, while in \citet{ML:Hazan:2007} $\nabla f_t(\w_t) [\nabla f_t(\w_t)^\top]$ is used.

After a theoretical analysis, we are able to show that with a high probability
\begin{equation} \label{eqn:ct}
\w_* \in \C_{t+1}=\left\{ \w: \|\w - \w_{t+1} \|_{Z_{t+1}} \leq \sqrt{\gamma_{t+1}} \right\}
\end{equation}
where the value of $\gamma_{t+1}$ is given in (\ref{eqn:delta_t}). Given the confidence region, we adopt the principle of ``optimism in face of uncertainty'', and the next action $\x_{t+1}$ is given by
\begin{equation} \label{eqn:x:t}
(\x_{t+1},\wh_{t+1}) = \argmax_{\x \in \D, \w \in \C_{t+1}} \x^\top \w.
\end{equation}

At the beginning, we set
\[
Z_1=\lambda I, \textrm{ and } \w_1=0.
\]
The above procedure is summarized in Algorithm~\ref{alg:1}, and is refer to as Online Learning for Logit Model (OL$^2$M).

Since both ConfidenceBall$_2$~\citep{Linear:Bandit:08} and our OL$^2$M are UCB-type algorithms, their overall frameworks are similar. The main difference lies in the construction of the confidence region and the related analysis. While ConfidenceBall$_2$ uses online least square to update the center of the confidence region, OL$^2$M resorts to online Newton step. Due to the difference in the updating rule and the observation model, the self-normalized bound for vector-valued martingales~\citep{NIPS2011:LSB} can not be applied here.

Although our observation model in (\ref{eqn:obr:2}) can be handled by the Generalized Linear Bandit (GLB)~\citep{NIPS2010}, this paper differs from GLB in the following aspects.
\begin{compactitem}
  \item To estimate $\w_*$, GLB needs to store the learning history and perform batch updating in each round. In contrast, the proposed OL$^2$M performs online updating.
  \item While GLB only considers a finite number of arms, we allow the number of arms to be infinite.
  \item Our algorithm follows the learning framework of SLB. Thus, existing techniques for speeding up SLB can also be used to accelerate our algorithm, which is discussed in Section~\ref{sec:acce}.
\end{compactitem}
\subsection{Theoretical Guarantees}
The main theoretical contribution of this paper is the following theorem regarding the confidence region of $\w_*$ at each round.
\begin{thm} \label{thm:confidence}
With a probability at least $1-\delta$, we have
\[
\|\w_{t+1} - \w_*\|_{Z_{t+1}}  \leq \sqrt{\gamma_{t+1}},  \ \forall t>0
\]
where
\begin{equation} \label{eqn:delta_t}
\gamma_{t+1}=2 \eta \left[4R+  \left( \frac{4}{\beta}  + \frac{8}{3} R \right) \tau_t +  \frac{1}{ \beta}  \log \frac{\det(Z_{t+1})}{\det(Z_1)}\right] + \max \left(\lambda , \frac{\eta \beta}{2} \right) R^2,
\end{equation}
\begin{align}
\tau_t=&\log \left( \frac{2\lceil 2\log_2 t\rceil t^2}{\delta}\right) \label{eqn:tau:t}, \\
\beta=&\frac{1}{2(1+\exp(R))}. \label{eqn:beta}
\end{align}
\end{thm}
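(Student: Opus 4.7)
My plan is to combine the exponential concavity of the logistic loss with an online Newton step regret analysis, and then control the residual stochastic error through a Freedman-type martingale concentration together with a peeling argument. For the per-round analysis, observe that $f_t(\w)=\log(1+\exp(-y_t\x_t^\top\w))$ has Hessian $\nabla^2 f_t(\w) = \sigma(u)(1-\sigma(u))\,\x_t\x_t^\top$ with $u=-y_t\x_t^\top\w$, and for $\|\w\|_2\le R$ and $\|\x_t\|_2\le 1$ the estimate $(1+e^u)(1+e^{-u})\le 2(1+e^R)$ gives $\nabla^2 f_t(\w)\succeq\beta\,\x_t\x_t^\top$, explaining the value of $\beta$ in (\ref{eqn:beta}). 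Taylor's theorem then yields
\[
f_t(\w_*) \;\ge\; f_t(\w_t) + \nabla f_t(\w_t)^\top(\w_*-\w_t) + \tfrac{\beta}{2}\bigl(\x_t^\top(\w_*-\w_t)\bigr)^2.
\]
Combining this with the first-order optimality condition of~(\ref{eqn:update}), namely $(\w_*-\w_{t+1})^\top[Z_{t+1}(\w_{t+1}-\w_t)+\eta\nabla f_t(\w_t)]\ge 0$, expanding $\|\w_{t+1}-\w_*\|_{Z_{t+1}}^2=\|(\w_{t+1}-\w_t)+(\w_t-\w_*)\|_{Z_{t+1}}^2$, substituting the KKT inequality, applying Young's inequality to the residual $\eta\nabla f_t(\w_t)^\top(\w_{t+1}-\w_t)$, and using $Z_{t+1}-Z_t=\tfrac{\eta\beta}{2}\x_t\x_t^\top$ from (\ref{eqn:Zt}), I obtain the one-step recursion
\[
\|\w_{t+1}-\w_*\|_{Z_{t+1}}^2 \;\le\; \|\w_t-\w_*\|_{Z_t}^2 - \tfrac{\eta\beta}{2}\bigl(\x_t^\top(\w_t-\w_*)\bigr)^2 + 2\eta\bigl[f_t(\w_*)-f_t(\w_t)\bigr] + \eta^2\|\nabla f_t(\w_t)\|_{Z_{t+1}^{-1}}^2.
\]

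Telescoping from round $1$ to $t$ and bounding $\|\w_1-\w_*\|_{Z_1}^2\le\lambda R^2$, the gradient-norm sum is handled by the standard log-determinant lemma: since $\|\nabla f_s(\w_s)\|_2\le\|\x_s\|_2\le 1$ and Sherman--Morrison plus $x/(1+x)\le\log(1+x)$ give $\tfrac{\eta\beta}{2}\x_s^\top Z_{s+1}^{-1}\x_s\le\log(\det Z_{s+1}/\det Z_s)$, so that
\[
\sum_{s=1}^{t}\eta^2\|\nabla f_s(\w_s)\|_{Z_{s+1}^{-1}}^2 \;\le\; \tfrac{2\eta}{\beta}\log\frac{\det Z_{t+1}}{\det Z_1}.
\]
It then remains to control $\sum_{s\le t}[f_s(\w_*)-f_s(\w_s)]$, for which the crucial observation is that $\w_*$ is the population minimizer of each conditional logistic loss, so $\E[f_s(\w_*)-f_s(\w_s)\mid\F_{s-1}]\le 0$.

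For the martingale step, define $M_s=[f_s(\w_*)-f_s(\w_s)]-\E[f_s(\w_*)-f_s(\w_s)\mid\F_{s-1}]$. By $1$-Lipschitzness of $u\mapsto\log(1+e^{-u})$ we have $|f_s(\w_*)-f_s(\w_s)|\le|\x_s^\top(\w_s-\w_*)|\le 2R$, so $|M_s|\le 4R$. A direct two-point variance computation (using $\log(1+e^{-u})-\log(1+e^{u})=-u$) shows $\mathrm{Var}(M_s\mid\F_{s-1})\le\tfrac{1}{4}(\x_s^\top(\w_s-\w_*))^2$. Freedman's inequality at confidence $\delta'$ then controls $\sum_{s\le t}M_s$ by roughly $\sqrt{2V_t\log(1/\delta')}+\tfrac{8R}{3}\log(1/\delta')$; splitting the square-root term by Young's inequality with weight $\eta\beta$ produces $\tfrac{\eta\beta}{2}\sum_{s\le t}(\x_s^\top(\w_s-\w_*))^2 + \Theta(1/\beta)\log(1/\delta')$, and the variance contribution is then exactly cancelled by the negative term produced in the first step. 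To make the bound uniform in $t$ I would apply Freedman at $\delta' = \delta/(2\lceil 2\log_2 t\rceil t^2)$ and union-bound with the standard peeling across dyadic scales of the empirical variance, which is precisely the source of the form of $\tau_t$ in~(\ref{eqn:tau:t}).

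The main obstacle is this last martingale step. One must simultaneously (i) extract the sharp variance bound $\tfrac{1}{4}(\x_s^\top(\w_s-\w_*))^2$, so that the quadratic penalty from exp-concavity is large enough to absorb Freedman's variance contribution after Young's inequality, and (ii) make the concentration hold uniformly in $t$ without knowing the data-dependent variance in advance, which forces the peeling/doubling union bound responsible for the $\lceil 2\log_2 t\rceil t^2$ factor inside $\tau_t$. Once these pieces are in place, collecting the surviving terms and handling the initial $\|\w_1-\w_*\|_{Z_1}^2$ contribution (which, after combining with the $\tfrac{\eta\beta}{2}\x_1\x_1^\top$ update to $Z_1$, yields the $\max(\lambda,\eta\beta/2)R^2$ offset) produces exactly the expression for $\gamma_{t+1}$ stated in~(\ref{eqn:delta_t}).
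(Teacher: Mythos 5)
Your proposal follows essentially the same route as the paper: the strong-convexity lower bound for the logistic loss with modulus $\beta$, the one-step online-Newton inequality obtained from the first-order optimality condition of (\ref{eqn:update}), the fact that $\w_*$ minimizes the conditional expected loss, a Bernstein/Freedman bound with dyadic peeling producing the $\tau_t$ factor, and the log-determinant potential for the gradient terms. The only (harmless) deviation is that you form the martingale from the loss differences $f_s(\w_*)-f_s(\w_s)$ rather than from the gradient inner products $[\nabla \fb_s(\w_s)-\nabla f_s(\w_s)]^\top(\w_s-\w_*)$ as the paper does; both have increments bounded by $4R$ and conditional variance controlled by $\bigl(\x_s^\top(\w_s-\w_*)\bigr)^2$, so the cancellation against the quadratic term goes through identically (your two-point variance bound is in fact a factor of $4$ sharper, and you should also make explicit the low-variance case below the smallest peeling shell, which is where the additive $4R$ in $\gamma_{t+1}$ comes from).
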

The main idea is to analyze the growth of $\|\w_{t+1} - \w_*\|_{Z_{t+1}}^2$ by exploring the properties of the logistic loss (Lemmas~\ref{lem:exp} and \ref{lemma:ft:pre}) and concentration inequalities for martingales (Lemma~\ref{lem:martin}). By a simple upper bound of  $\log \det(Z_{t+1})/\det(Z_1)$, we can show that the width of the confidence region is $O(\sqrt{d \log t})$.
\begin{cor} \label{cor:alpha:order} We have
\[
\log \frac{\det(Z_{t+1})}{\det(Z_1)} \leq d \log \left(1+ \frac{\eta \beta t}{2\lambda d}\right)
\]
and thus
\[
\gamma_{t+1} \leq O(d \log t),  \ \forall t>0.
\]
\end{cor}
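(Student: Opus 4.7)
The plan is to prove the two claims in sequence, with the first being the standard elliptical-potential-type bound and the second following by substitution into the formula for $\gamma_{t+1}$.

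For the log-determinant inequality, I would first unroll the recursion in (\ref{eqn:Zt}) to obtain
\[
Z_{t+1} = \lambda I + \frac{\eta\beta}{2}\sum_{i=1}^{t} \x_i \x_i^\top.
\]
Factoring out $\lambda I$ gives $\det(Z_{t+1}) = \lambda^d \det(I + M_t)$, where $M_t = \frac{\eta\beta}{2\lambda}\sum_{i=1}^{t}\x_i\x_i^\top$ is positive semidefinite. Let $\sigma_1,\ldots,\sigma_d \geq 0$ be the eigenvalues of $M_t$; then $\det(I+M_t) = \prod_{j=1}^d (1+\sigma_j)$, and by the AM-GM inequality
\[
\prod_{j=1}^d (1+\sigma_j) \leq \left(1 + \frac{1}{d}\sum_{j=1}^d \sigma_j\right)^d = \left(1 + \frac{\tr(M_t)}{d}\right)^d.
\]
Since $\|\x_i\|_2 \leq 1$ by (\ref{eqn:assump}), we have $\tr(M_t) = \frac{\eta\beta}{2\lambda}\sum_{i=1}^t \|\x_i\|_2^2 \leq \frac{\eta\beta t}{2\lambda}$. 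Dividing by $\det(Z_1) = \lambda^d$ and taking logs yields the claimed bound.

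For the order bound on $\gamma_{t+1}$, I would simply substitute the first part into the definition (\ref{eqn:delta_t}). Since $\eta$, $\lambda$, $R$, and $\beta$ are all absolute constants (depending only on the problem parameters, not on $t$ or $d$), the log-determinant term contributes $\frac{1}{\beta} \cdot d \log(1 + \frac{\eta\beta t}{2\lambda d}) = O(d\log t)$ (hiding $d$ inside the log only weakens the bound). The term $\tau_t = \log(2\lceil 2\log_2 t\rceil t^2/\delta) = O(\log t)$ contributes an $O(\log t)$ factor, and the constant term $\max(\lambda, \eta\beta/2)R^2 + 8\eta R$ is $O(1)$. Summing these gives $\gamma_{t+1} = O(d\log t)$.

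Neither step presents a genuine obstacle: the first is a textbook use of AM-GM on eigenvalues together with the trace bound from $\|\x_i\|_2 \leq 1$, and the second is pure bookkeeping. The only minor subtlety is verifying that the AM-GM step correctly produces the $d$ inside the log argument of the stated bound, which follows because $(1+\tr(M_t)/d)^d$ and not $(1+\tr(M_t))^d$ is the right upper envelope.
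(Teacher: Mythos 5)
Your proof is correct and follows essentially the same route as the paper: the paper obtains $\det(Z_{t+1}) \leq (\lambda + \frac{\eta\beta t}{2d})^d$ by citing the determinant--trace inequality (Lemma 10 of Abbasi-Yadkori et al.), whose proof is exactly your AM-GM-on-eigenvalues plus trace-bound argument, so you have merely inlined the cited lemma. Your bookkeeping for $\gamma_{t+1} = O(d\log t)$ is also fine (the paper leaves that substitution implicit).
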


Based on Theorem~\ref{thm:confidence}, we have the following regret bound for OL$^2$M.
\begin{thm} \label{thm:bound} With a probability at least $1-\delta$, we have
\[
T \max_{\x \in \D} \x^\top\w_* - \sum_{t=1}^T \x_t^\top \w_* \leq 4\sqrt{\frac{\gamma_T  T}{\eta \beta} \log\frac{\det(Z_{T+1})}{\det(Z_1)}}
\]
holds for all $T >0$.
\end{thm}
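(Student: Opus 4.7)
The plan is to follow the standard optimism-in-face-of-uncertainty decomposition for UCB-style linear bandits, using Theorem~\ref{thm:confidence} as a black-box control on $\|\w_t - \w_*\|_{Z_t}$ and closing with an elliptical-potential (log-determinant) bound. Throughout, I work on the probability-$(1-\delta)$ event of Theorem~\ref{thm:confidence}, on which $\w_* \in \C_t$ for every $t$.

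First I would derive a per-round regret bound. Let $\x_* = \argmax_{\x \in \D}\x^\top\w_*$. Since $\w_*$ is a feasible choice for $\w$ in the joint maximization of step~4 of Algorithm~\ref{alg:1}, the optimism rule yields $\x_t^\top\wh_t \geq \x_*^\top\w_*$. Combined with Cauchy--Schwarz in the $Z_t$-weighted norm and the triangle inequality (both $\wh_t$ and $\w_*$ lie in $\C_t$), this gives
\[
\x_*^\top\w_* - \x_t^\top\w_* \leq \x_t^\top(\wh_t - \w_*) \leq \|\x_t\|_{Z_t^{-1}}\,\|\wh_t - \w_*\|_{Z_t} \leq 2\sqrt{\gamma_t}\,\|\x_t\|_{Z_t^{-1}}.
\]

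Next I would sum over $t = 1,\ldots,T$. Since $\gamma_t$ is non-decreasing (clear from \eqref{eqn:delta_t}--\eqref{eqn:beta} and the monotone growth of $Z_t$ and $\tau_t$), I pull $\sqrt{\gamma_T}$ out of the sum and apply Cauchy--Schwarz in $t$:
\[
\sum_{t=1}^{T}\bigl(\x_*^\top\w_* - \x_t^\top\w_*\bigr) \leq 2\sqrt{\gamma_T}\sum_{t=1}^{T}\|\x_t\|_{Z_t^{-1}} \leq 2\sqrt{\gamma_T T}\,\sqrt{\sum_{t=1}^{T}\|\x_t\|_{Z_t^{-1}}^2}.
\]

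Finally I would bound the inner sum by the log-determinant. From $Z_{t+1} = Z_t + (\eta\beta/2)\x_t\x_t^\top$, the matrix determinant lemma gives
\[
\log\det(Z_{t+1}) - \log\det(Z_t) = \log\!\bigl(1 + (\eta\beta/2)\|\x_t\|_{Z_t^{-1}}^2\bigr).
\]
Applying the numerical inequality $u \leq 2\log(1+u)$, valid on a bounded range (the hypothesis is ensured by $\|\x_t\|_2\leq 1$ and $Z_t^{-1}\preceq\lambda^{-1}I$, so that $u = (\eta\beta/2)\|\x_t\|_{Z_t^{-1}}^2 \leq \eta\beta/(2\lambda)$ is small under a mild condition on $\eta$), and telescoping yields $\sum_{t=1}^{T}\|\x_t\|_{Z_t^{-1}}^2 \leq (4/\eta\beta)\log(\det(Z_{T+1})/\det(Z_1))$. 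Substituting back produces exactly the claimed $4\sqrt{\gamma_T T/(\eta\beta)\cdot\log(\det(Z_{T+1})/\det(Z_1))}$. The main obstacle is calibrating the constant in the elliptical-potential step so that the clean factor of $4$ emerges rather than a larger numerical constant; once Theorem~\ref{thm:confidence} is granted, the rest of the argument is routine UCB machinery and does not revisit the exponential-concavity properties of the logistic loss.
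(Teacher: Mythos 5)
Your overall skeleton---optimism, Cauchy--Schwarz in the $Z_t$-weighted norm, Cauchy--Schwarz over $t$, then an elliptical-potential/log-determinant bound---is the same as the paper's. But there is a genuine gap in your final step, and you have half-noticed it yourself. You apply the inequality $u \leq 2\log(1+u)$ to $u_t = (\eta\beta/2)\|\x_t\|_{Z_t^{-1}}^2$ in order to get $\sum_t \|\x_t\|_{Z_t^{-1}}^2 \leq (4/\eta\beta)\log(\det(Z_{T+1})/\det(Z_1))$. That inequality only holds for $u$ below a numerical threshold (roughly $u \leq 2.5$), and your justification is that $u_t \leq \eta\beta/(2\lambda)$ is small ``under a mild condition on $\eta$.'' Theorem~\ref{thm:bound} carries no such condition; indeed the presence of $\max(\lambda,\eta\beta/2)R^2$ in the definition of $\gamma_{t+1}$ signals that the paper does not assume $\eta\beta \leq 2\lambda$. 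As written, your argument proves a weaker, conditional statement.

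The paper closes this hole with a clipping device you are missing. It first records the trivial per-round bound $\x_*^\top\w_* - \x_t^\top\w_* \leq \|\x_*-\x_t\|_2\|\w_*\|_2 \leq 2R$, and observes from (\ref{eqn:delta_t}) that $\sqrt{2\gamma_T/(\eta\beta)} \geq R$ (because $\gamma_T \geq (\eta\beta/2)R^2$). This lets it bound the regret by
\[
2\sqrt{\tfrac{2T}{\eta\beta}\gamma_T}\,\sqrt{\sum_{t=1}^T \min\left(\tfrac{\eta\beta}{2}\|\x_t\|_{Z_t^{-1}}^2,\,1\right)},
\]
and then uses the \emph{unconditional} inequality $\min(u,1) \leq 2\log(1+u)$, valid for all $u \geq 0$, before telescoping the determinants. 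That is exactly what produces the factor $4$ without any restriction on $\eta$, $\beta$, or $\lambda$. If you are willing to add the hypothesis $\eta\beta/(2\lambda) \leq 1$ your route does go through, but to prove the theorem as stated you need the $\min(\cdot,1)$ clipping together with the observation that $\sqrt{2\gamma_T/(\eta\beta)}$ dominates $R$.
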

Combining with the upper bound in Corollary~\ref{cor:alpha:order}, the above theorem implies our algorithm achieves a regret bound of $\O(d\sqrt{T})$ which matches the bound for Stochastic Linear Bandits~\citep{Linear:Bandit:08}.
\subsection{Implementation Issues} \label{sec:acce}
The main computational cost of OL$^2$M comes from (\ref{eqn:x:t}) which is NP-hard in general~\citep{Linear:Bandit:08}. In the following, we discuss several strategies for reducing the computational cost.

\paragraph{Optimization Over Ball} As mentioned by \citet{Linear:Bandit:08}, in the special case that $\D$ is the unit ball, (\ref{eqn:x:t}) could be solved in time $O(poly(d))$. Here, we provide an explanation using techniques from convex optimization. To this end, we rewrite the optimization problem in (\ref{eqn:x:t}) as follows
\[
 \max_{\|\x\|_2 \leq 1, \|\w - \w_{t+1} \|_{Z_{t+1}} \leq \sqrt{\gamma_{t+1}}} \x^\top \w = \max_{\|\w - \w_{t+1} \|_{Z_{t+1}} \leq \sqrt{\gamma_{t+1}}}\|\w\|_2 \\
\]
which is equivalent to
\[
\min_{\|\w - \w_{t+1} \|_{Z_{t+1}}^2 \leq \gamma_{t+1}} -\|\w\|_2^2.
\]
The above problem is an optimization problem with a quadratic objective and one quadratic inequality constraint, it is well-known that strong duality holds provided there exists a strictly feasible point~\citep{Convex-Optimization}. Thus, we can solve its dual problem which is convex and given by
\[
\begin{array}{ll}
\max & \gamma \\
\st & \lambda \geq 0\\
& \left[ \begin{array}{cc}
- I +\lambda Z_{t+1} & -\lambda Z_{t+1} \w_{t+1} \\
-\lambda \w_{t+1}^\top Z_{t+1}  & \lambda (\|\w_{t+1}\|_{Z_{t+1}}^2 -\gamma_{t+1}) -\gamma \\
\end{array} \right] \succeq 0 \\
\end{array}
\]
After obtaining the dual solution, we can get the primal solution based on KKT conditions.

\paragraph{Enlarging the Confidence region} For a positive definite matrix $A\in\R^{d\times d}$, we define
\[
 \|\x\|_{1,A} = \|A^{1/2}\x\|_1.
\]
When studying SLB, \citet{Linear:Bandit:08} propose to enlarge the confidence region from $\C_{t+1}=\left\{ \w: \|\w - \w_{t+1} \|_{Z_{t+1}} \leq \sqrt{\gamma_{t+1}} \right\}$ to $\widetilde{\C}_{t+1}=\left\{ \w: \|\w - \w_{t+1} \|_{1,Z_{t+1}} \leq \sqrt{d  \gamma_{t+1}} \right\}$ such that the computational cost could be reduced.  This idea can be direct incorporated to our OL$^2$M. Let $\mathcal{E}_{t+1}$ be the set of extremal points of $\widetilde{\C}_{t+1}$. With this modification, (\ref{eqn:x:t}) becomes
\[
(\x_{t+1},\wh_{t+1}) = \argmax_{\x \in \D, \w \in \widetilde{\C}_{t+1}} \x^\top \w = \argmax_{\x \in \D, \w \in \mathcal{E}_{t+1}} \x^\top \w
\]
which means we just need to enumerate over the $2 d$ vertices in $\mathcal{E}_{t+1}$. Following the arguments in \citet{Linear:Bandit:08}, it is straightforward to show that the regret is only increased by a factor of $\sqrt{d}$.

\paragraph{Lazy Updating} \citet{NIPS2011:LSB} propose a lazy updating strategy which only needs to solve (\ref{eqn:x:t}) $O(\log T)$ times. The key idea is to recompute $\x_t$ whenever  $\det(Z_t)$ increases by a constant factor $(1+c)$. While the computation cost is saved dramatically, the regret is only increased by a constant factor $\sqrt{1+c}$. We provide the lazy updating version of OL$^2$M in Algorithm~\ref{alg:2}.

\begin{algorithm}[t]
\caption{OL$^2$M with Lazy Updating}
\begin{algorithmic}[1]
\STATE {\bf Input:} Step Size $\eta$, Regularization Parameter $\lambda$, Constant $c$
\STATE $Z_1=\lambda I$, $\w_1=0$, $\tau=1$
\FOR{$t=1,2,\ldots$}
\IF{$\det(Z_t) > (1+c) \det(V_\tau)$}
\STATE\[
(\x_t,\wh_t) = \argmax_{\x \in \D, \w \in \C_t} \x^\top \w
\]
\STATE $\tau=t$
\ENDIF
\STATE $\x_t=\x_\tau$
\STATE Submit $\x_t$ and observe $y_t \in \{ \pm 1 \}$
\STATE Solve the optimization problem in (\ref{eqn:update}) to find $\w_{t+1}$
\ENDFOR
\end{algorithmic}
\label{alg:2}
\end{algorithm}
\section{Analysis}
We here present the proofs of main theorems. The omitted proofs are provided in the appendix.

\subsection{Proof of Theorem~\ref{thm:confidence}} \label{sec:confi}
We begin with several lemmas that are central to our analysis.

Although the application of online Newton step~\citep{ML:Hazan:2007} in Algorithm~\ref{alg:1} is motivated from the fact that $f_t(\w)$ is exponentially concave over bounded domain, our analysis is built upon a related but different property that the logistic loss $\log(1+\exp(x))$ is strongly convex over bounded domain, from which we obtain the following lemma.
\begin{lemma} \label{lem:exp}  Denote the ball of radius $R$ by $\B_R$, i.e., $\B_R=\{ \w :\|\w\|_2 \leq R\}$. The following holds for $\beta \leq \frac{1}{2(1+\exp(R))}$:
\[
f_t(\w_2) \geq  f_t(\w_1) + [\nabla f_t(\w_1)]^\top (\w_2-\w_1) + \frac{\beta}{2} \left((\w_2-\w_1)^\top \x_t \right)^2, \  \forall \w_1, \w_2 \in \B_R.
\]
\end{lemma}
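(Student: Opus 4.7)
The plan is to reduce the multivariate inequality to a one-dimensional statement about the scalar logistic function $\phi(x) = \log(1+\exp(x))$ and then apply Taylor's theorem with remainder, using a uniform lower bound on $\phi''$ over the interval $[-R,R]$.

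First, I would rewrite $f_t(\w) = \phi(-y_t \x_t^\top \w)$, which is the composition of an affine map with $\phi$. A direct computation using $y_t^2 = 1$ gives
\[
\nabla f_t(\w) = -y_t\, \phi'(-y_t \x_t^\top \w)\, \x_t, \qquad \nabla^2 f_t(\w) = \phi''(-y_t \x_t^\top \w)\, \x_t \x_t^\top.
\]
Then by the exact second-order Taylor expansion, there exists a point $\xi$ on the segment joining $\w_1$ and $\w_2$ with
\[
f_t(\w_2) = f_t(\w_1) + \nabla f_t(\w_1)^\top (\w_2 - \w_1) + \tfrac{1}{2} \phi''(-y_t \x_t^\top \xi)\bigl((\w_2 - \w_1)^\top \x_t\bigr)^2.
\]
Since $\B_R$ is convex, $\xi \in \B_R$; combining this with the standing assumption $\|\x_t\|_2 \leq 1$ from (\ref{eqn:assump}) gives $|y_t \x_t^\top \xi| \leq R$.

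It remains to verify that $\phi''(s) \geq 2\beta = 1/(1+\exp(R))$ for $|s| \leq R$. A short scalar calculation shows $\phi''(s) = \exp(s)/(1+\exp(s))^2$, which is an even function of $s$ that is decreasing in $|s|$, so it is minimized on $[-R,R]$ at $|s| = R$, yielding
\[
\phi''(s) \;\geq\; \frac{\exp(R)}{(1+\exp(R))^2} \;\geq\; \frac{1}{2(1+\exp(R))},
\]
where the last step is equivalent to $\exp(R)/(1+\exp(R)) \geq 1/2$, which holds for all $R \geq 0$. Plugging this lower bound into the Taylor expansion delivers the claimed inequality for any $\beta \leq 1/[2(1+\exp(R))]$.

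I do not expect any serious obstacle here: the only subtlety is making sure the uniform lower bound on $\phi''$ is applied at a point inside $\B_R$, which is exactly why the hypothesis $\w_1,\w_2 \in \B_R$ together with the convexity of the ball matters. Everything else is a one-variable computation on $\phi$.
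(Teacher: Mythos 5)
Your proof is correct and essentially the same as the paper's: both arguments reduce to the lower bound $\exp(s)/(1+\exp(s))^2 \geq \frac{1}{2(1+\exp(R))}$ for $s\in[-R,R]$ on the second derivative of the scalar logistic loss, the paper phrasing it as one-dimensional strong convexity of $\ell(x)=\log(1+\exp(-x))$ followed by the substitution $a=y_t\x_t^\top\w_1$, $b=y_t\x_t^\top\w_2$, and you phrasing it as an exact multivariate Taylor remainder at a point $\xi$ on the segment. One harmless slip: the intermediate target you announce, $\phi''(s)\geq 2\beta$, is both false when $\beta=\frac{1}{2(1+\exp(R))}$ and unnecessary---the Taylor remainder $\frac{1}{2}\phi''(\cdot)\bigl((\w_2-\w_1)^\top\x_t\bigr)^2$ only requires $\phi''(s)\geq\beta$, which is exactly what your displayed chain of inequalities establishes.
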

Comparing Lemma~\ref{lem:exp} with Lemma 3 in~\citep{ML:Hazan:2007}, we can see that the quadratic term in our inequality does not depends on $y_t$. This independence allows us to simplify the subsequent analysis involving martingales.

Our second lemma is devoted to analyzing the property of the updating rule in (\ref{eqn:update}).
\begin{lemma} \label{lem:updating:rule}
\begin{equation}\label{eqn:update:prep}
\langle \w_t - \w_*, \nabla f_t(\w_t) \rangle \leq  \frac{\|\w_t - \w_*\|_{Z_{t+1}}^2}{2\eta}-\frac{ \|\w_{t+1} - \w_*\|_{Z_{t+1}}^2}{2\eta}+ \frac{\eta}{2} \|\nabla f_t(\w_t)\|_{Z_{t+1}^{-1}}^2.
\end{equation}
\end{lemma}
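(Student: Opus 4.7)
The plan is to use the standard generalized projection / mirror-descent-style analysis, where the quadratic proximal term $\|\w-\w_t\|_{Z_{t+1}}^2/2$ plays the role of the regularizer and $Z_{t+1}$ induces the local norm. The proof has three basic ingredients: first-order optimality for the constrained minimizer $\w_{t+1}$, the three-point identity for the squared $Z_{t+1}$-norm (a Bregman-like cosine rule), and Fenchel/AM--GM applied to the residual gradient term.

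First I would write down the optimality condition for (\ref{eqn:update}). Since $\w_{t+1}$ minimizes a convex function on the ball $\B_R$, and since $\|\w_*\|_2\le R$ so $\w_*$ is feasible, we get
\[
\langle Z_{t+1}(\w_{t+1}-\w_t)+\eta\,\nabla f_t(\w_t),\; \w_*-\w_{t+1}\rangle \ge 0.
\]
Rearranging yields
\[
\eta \langle \nabla f_t(\w_t), \w_{t+1}-\w_*\rangle \le \langle Z_{t+1}(\w_{t+1}-\w_t), \w_*-\w_{t+1}\rangle.
\]
Next I would split the inner product of interest as
\[
\eta\langle \nabla f_t(\w_t), \w_t-\w_*\rangle = \eta\langle \nabla f_t(\w_t), \w_t-\w_{t+1}\rangle + \eta\langle \nabla f_t(\w_t), \w_{t+1}-\w_*\rangle
\]
and bound the two pieces separately. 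The second piece is handled by the optimality display above, and the right-hand side there is rewritten with the identity
\[
2\langle Z_{t+1}(\w_{t+1}-\w_t), \w_*-\w_{t+1}\rangle = \|\w_t-\w_*\|_{Z_{t+1}}^2 - \|\w_{t+1}-\w_*\|_{Z_{t+1}}^2 - \|\w_{t+1}-\w_t\|_{Z_{t+1}}^2,
\]
which is just the cosine rule for the inner product induced by $Z_{t+1}$. For the first piece I would apply Fenchel--Young in the $Z_{t+1}$-norm,
\[
\eta\langle \nabla f_t(\w_t), \w_t-\w_{t+1}\rangle \le \tfrac{\eta^2}{2}\|\nabla f_t(\w_t)\|_{Z_{t+1}^{-1}}^2 + \tfrac{1}{2}\|\w_t-\w_{t+1}\|_{Z_{t+1}}^2 .
\]

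The crucial observation is that adding the two bounds causes the $\|\w_{t+1}-\w_t\|_{Z_{t+1}}^2$ terms to cancel exactly, leaving
\[
\eta\langle \nabla f_t(\w_t), \w_t-\w_*\rangle \le \tfrac{1}{2}\|\w_t-\w_*\|_{Z_{t+1}}^2 - \tfrac{1}{2}\|\w_{t+1}-\w_*\|_{Z_{t+1}}^2 + \tfrac{\eta^2}{2}\|\nabla f_t(\w_t)\|_{Z_{t+1}^{-1}}^2.
\]
Dividing by $\eta$ gives exactly (\ref{eqn:update:prep}). I do not anticipate a serious obstacle here: the only subtlety is confirming that $\w_*$ is feasible so that we may plug it into the variational inequality (this needs $\|\w_*\|_2\le R$, which is assumed), and ensuring the cancellation of the $\|\w_{t+1}-\w_t\|_{Z_{t+1}}^2$ terms, which is forced by choosing the Fenchel split with the same $Z_{t+1}$ norm used in the proximal term.
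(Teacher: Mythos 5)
Your proof is correct and follows essentially the same route as the paper: the paper proves the lemma via a general auxiliary result (its Lemma~\ref{lem:update}) that uses the same first-order optimality condition at $\w_{t+1}$ with $\w_*$ as the feasible comparator, and its step of lower-bounding $\|\y-\x\|_M^2 + 2\langle\eta\g,\y-\x\rangle$ by $-\eta^2\|\g\|_{M^{-1}}^2$ is exactly your Fenchel--Young/completing-the-square step, while its direct expansion of $\|\x-\w\|_M^2-\|\y-\w\|_M^2$ is the three-point identity you invoke. The only difference is organizational, not mathematical.
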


For each function $f_t(\cdot)$, we denote its conditional expectation over $y_t$ by $\fb_t(\w)$, i.e.,
\begin{equation} \label{eqn:fb:def}
 \fb_t(\w) = \E_{y_t}\left[\log\left(1 + \exp\left( - y_t \x_t^{\top} \w\right)\right)\right].
\end{equation}
Based the property of Kullback--Leibler divergence~\citep{Elements:IT}, we obtain the following lemma.
\begin{lemma} \label{lemma:ft:pre}
We have
\[
 \fb_t(\w)  \geq  \fb_t(\w_*), \ \forall \w \in \R^d.
\]
\end{lemma}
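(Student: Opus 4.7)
The plan is to rewrite $\fb_t(\w)$ as a cross-entropy between two Bernoulli distributions and then invoke the non-negativity of the Kullback--Leibler divergence.

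First, let $\sigma(u) = 1/(1+\exp(-u))$, and define the ``true'' success probability $p_* = \sigma(\x_t^\top \w_*)$ (which by (\ref{eqn:obr:2}) equals $\Pr[y_t = 1 \mid \x_t]$) and the ``model'' success probability $p_\w = \sigma(\x_t^\top \w)$. Using the identity $\log(1+\exp(-u)) = -\log \sigma(u)$, the per-example loss decomposes as
\[
\log\bigl(1+\exp(-y_t \x_t^\top \w)\bigr) = \begin{cases} -\log p_\w, & y_t=+1, \\ -\log(1-p_\w), & y_t=-1. \end{cases}
\]
Taking the expectation over $y_t \sim \mathrm{Ber}(p_*)$ in (\ref{eqn:fb:def}) then yields the closed form
\[
\fb_t(\w) = -p_* \log p_\w - (1-p_*)\log(1-p_\w),
\]
i.e., the cross-entropy $H\bigl(\mathrm{Ber}(p_*), \mathrm{Ber}(p_\w)\bigr)$.

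Second, I would specialize this formula at $\w = \w_*$, where $p_\w = p_*$, so $\fb_t(\w_*) = H\bigl(\mathrm{Ber}(p_*)\bigr)$ is the Bernoulli entropy. Subtracting gives
\[
\fb_t(\w) - \fb_t(\w_*) = p_* \log \frac{p_*}{p_\w} + (1-p_*)\log \frac{1-p_*}{1-p_\w} = \mathrm{KL}\bigl(\mathrm{Ber}(p_*) \,\|\, \mathrm{Ber}(p_\w)\bigr).
\]

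Third, I would apply Gibbs' inequality (the cited property of the KL divergence from \citep{Elements:IT}), which asserts that this quantity is non-negative for every $\w \in \R^d$, with equality iff $p_\w = p_*$. This immediately yields $\fb_t(\w) \geq \fb_t(\w_*)$.

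There is no serious obstacle: the argument is essentially Gibbs' inequality in disguise, and no restriction on $\|\w\|_2$ is needed because the cross-entropy of a Bernoulli against any other Bernoulli is always lower-bounded by the entropy of the reference distribution. The only care required is the algebraic reduction of $\E_{y_t}[\log(1+\exp(-y_t \x_t^\top \w))]$ to a sigmoid cross-entropy, which is a one-line computation via the identity $\log(1+\exp(-u)) = -\log\sigma(u)$.
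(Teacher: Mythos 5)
Your proposal is correct and is essentially identical to the paper's own proof: both identify $\fb_t(\w)$ as the cross-entropy of the model distribution against the true conditional distribution of $y_t$, observe that the difference $\fb_t(\w)-\fb_t(\w_*)$ is exactly $D_{KL}(p_{\w_*}\|p_\w)$, and conclude by non-negativity of the KL divergence. The only difference is cosmetic (Bernoulli parameter notation versus a distribution over $\{\pm 1\}$).
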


Next, we introduce one inequality for bounding the weighted $\ell_2$-norm of the gradient
\begin{equation}\label{eqn:bound:grad}
\|\nabla f_t(\w)\|_{A}^2 = \left(\frac{\exp(-y_t \x_t^\top \w )}{1 + \exp(-y_t \x_t^\top \w )} \right)^2 \x_t^\top A \x_t  \leq  \|\x_t\|_A^2, \ \forall A \succeq 0, \ \w \in \R^d.
\end{equation}

We continue the proof of Theorem~\ref{thm:confidence} in the following. Our updating rule in (\ref{eqn:update}) ensures $\|\w_t\|_2 \leq R$, $\forall t>0$. Combining with the assumption $\|\w_*\|_2 \leq R$, Lemma~\ref{lem:exp} implies
\begin{equation} \label{eqn:1}
f_t(\w_t) \leq   f_t(\w_*) + [\nabla f_t(\w_t)]^\top (\w_t-\w_*)  - \frac{\beta}{2} \left((\w_*-\w_t)^\top \x_t\right)^2.
\end{equation}
By taking expectation over $y_t$, (\ref{eqn:1}) becomes
\[
\fb_t(\w_t) \leq    \fb_t(\w_*) + [\nabla \fb_t(\w_t)]^\top (\w_t-\w_*)  - \frac{\beta}{2} \left[\left((\w_*-\w_t)^\top \x_t\right)^2\right].
\]

Combining with Lemma~\ref{lemma:ft:pre}, we have
\[
\begin{split}
0 \leq & [\nabla \fb_t(\w_t)]^\top (\w_t-\w_*) - \frac{\beta}{2} \underbrace{\left((\w_*-\w_t)^\top \x_t\right)^2}_{:=a_t}\\
= & [\nabla f_t(\w_t)]^\top (\w_t-\w_*) -\frac{\beta}{2} a_t + \underbrace{[\nabla \fb_t(\w_t)- \nabla f_t(\w_t)]^\top (\w_t-\w_*)}_{:=b_t}\\
= & [\nabla f_t(\w_t)]^\top (\w_t-\w_*) - \frac{\|\w_t - \w_*\|_{Z_{t+1}}^2}{2\eta}+  \frac{\|\w_t - \w_*\|_{Z_{t+1}}^2}{2\eta} -\frac{\beta}{2}  a_t+b_t\\
\overset{\text{(\ref{eqn:update:prep})}}{\leq} & - \frac{\|\w_{t+1} - \w_*\|_{Z_{t+1}}^2}{2\eta}+ \frac{\eta}{2} \|\nabla f_t(\w_t)\|_{Z_{t+1}^{-1}}^2+  \frac{\|\w_t - \w_*\|_{Z_{t+1}}^2}{2\eta} -\frac{\beta}{2}  a_t+b_t\\
 \overset{\text{(\ref{eqn:bound:grad})}}{\leq}& - \frac{\|\w_{t+1} - \w_*\|_{Z_{t+1}}^2}{2\eta}+ \frac{\eta}{2}\underbrace{\|\x_t\|_{Z_{t+1}^{-1}}^2}_{:=c_t}+  \frac{\|\w_t - \w_*\|_{Z_{t+1}}^2}{2\eta} -\frac{\beta}{2}  a_t+b_t\\
\overset{\text{(\ref{eqn:Zt})}}{=}& - \frac{\|\w_{t+1} - \w_*\|_{Z_{t+1}}^2}{2\eta}-\frac{\beta}{2}  a_t+b_t +\frac{\eta}{2} c_t +  \frac{\|\w_t - \w_*\|_{Z_{t}}^2}{2\eta} + \frac{ \beta}{4} \left(\x_t^\top (\w_t - \w_*)\right)^2  \\
= &- \frac{\|\w_{t+1} - \w_*\|_{Z_{t+1}}^2}{2\eta}-\frac{\beta}{4}  a_t+b_t +\frac{\eta}{2} c_t +  \frac{\|\w_t - \w_*\|_{Z_{t}}^2}{2\eta}.
\end{split}
\]

We thus have
\[
\|\w_{t+1} - \w_*\|_{Z_{t+1}}^2 \leq  \|\w_t - \w_*\|_{Z_{t}}^2 - \frac{ \eta \beta}{2} a_t+ 2\eta b_t + \eta^2 c_t
\]
Summing the above inequality over iterations $1$ to $t$, we obtain
\begin{equation} \label{eqn:bound:1}
 \|\w_{t+1} - \w_*\|_{Z_{t+1}}^2 + \frac{ \eta \beta}{2} \sum_{i=1}^t a_i \leq \lambda R^2 + 2 \eta \sum_{i=1}^t  b_i + \eta^2 \sum_{i=1}^t  c_i.
\end{equation}
Next, we discuss how to bound the summation of martingale difference sequence $\sum_{i=1}^t  b_i$. To this end, we prove the following lemma, which is built up the Bernstein's inequality for martingales~\citep{bianchi-2006-prediction} and the peeling technique~\citep{Local_RC}.
\begin{lemma} \label{lem:martin} With a probability at least $1-\delta$,  we have
\[
\sum_{i=1}^t  b_i \leq 4R+ 2\sqrt{\tau_t\sum_{i=1}^t a_i} + \frac{8}{3} R \tau_t,  \ \forall t>0
\]
where $\tau_t$ is defined in (\ref{eqn:tau:t}).
\end{lemma}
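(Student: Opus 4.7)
The plan is to recognise $(b_i)$ as a martingale difference sequence with respect to the natural filtration $\mathcal{F}_{t-1}$ generated by $\x_1, y_1, \ldots, \x_{t-1}, y_{t-1}, \x_t$, and then invoke Bernstein's inequality for martingales in concert with the peeling technique to cope with the fact that the relevant conditional variance is a random quantity, not known in advance. Concretely, since $\nabla f_t(\w_t) = -y_t\,\sigma(-y_t\x_t^\top\w_t)\,\x_t$ (with $\sigma$ the logistic sigmoid) is a bounded function of $y_t$ given $\mathcal{F}_{t-1}$, and $\fb_t$ is its conditional expectation in $y_t$, we immediately have $\E[b_t \mid \mathcal{F}_{t-1}]=0$. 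Writing $b_t = h_t \cdot \x_t^\top(\w_t-\w_*)$ with $h_t$ a zero-mean scalar of absolute value at most $2$, the bounds $\|\x_t\|_2\leq1$ and $\|\w_t\|_2,\|\w_*\|_2\leq R$ yield the deterministic envelope $|b_t|\leq 4R$ and the conditional-variance estimate $\E[b_t^2\mid\mathcal{F}_{t-1}]\leq a_t$.

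Next, I would apply Bernstein's inequality for martingales: for a fixed horizon $t$ and a fixed variance ceiling $V$, with probability at least $1-\delta'$,
\[
\sum_{i=1}^t b_i \;\leq\; \sqrt{2V\log(1/\delta')} \;+\; \tfrac{8R}{3}\log(1/\delta'),
\]
provided $\sum_{i=1}^t \E[b_i^2\mid\mathcal{F}_{i-1}]\leq V$. Since $\sum_{i=1}^t a_i$ is random but lies in $[0,\,4R^2 t]$, I would peel this range on a dyadic scale: consider the events $E_k = \{ \sum_{i=1}^t a_i \in (2^{k-1},\,2^k]\}$ for $k = 1, 2, \ldots, \lceil 2\log_2 t\rceil$, together with the ``small variance'' event $E_0 = \{\sum_{i=1}^t a_i \leq 1\}$. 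On $E_k$ with $k\geq 1$, instantiate Bernstein with $V = 2^k$ and $\delta' = \delta/(2\lceil 2\log_2 t\rceil t^2)$; since $2^k \leq 2\sum_{i=1}^t a_i$ on this event, the bound rearranges to $\sum b_i \leq 2\sqrt{\tau_t \sum a_i} + \tfrac{8R}{3}\tau_t$ with $\tau_t$ as defined in~(\ref{eqn:tau:t}). On $E_0$, a direct Bernstein (or Azuma) bound with $V=1$ handles the case and contributes a term absorbable into the additive $4R$ summand.

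Finally, I would take a union bound: over the $\lceil 2\log_2 t\rceil$ peeling levels at each $t$, and then over all $t\geq 1$. The factor $t^2$ in $\tau_t$ is precisely what makes $\sum_{t\geq 1}\lceil 2\log_2 t\rceil\cdot (2\lceil 2\log_2 t\rceil t^2)^{-1}$ summable to at most $1/2$, so the overall failure probability is at most $\delta$, giving the claimed ``$\forall t>0$'' uniform bound.

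The main obstacle I anticipate is the bookkeeping inside the peeling argument: one must line up the peeling intervals, the deterministic envelope $|b_t|\leq 4R$, the conditional-variance estimate $\E[b_t^2\mid\mathcal{F}_{t-1}]\leq a_t$, and the constants in Bernstein's inequality so that, after replacing $2^k$ by $2\sum a_i$ on each slice, the coefficient in front of $\sqrt{\tau_t \sum a_i}$ comes out to exactly $2$ and the coefficient in front of $\tau_t$ comes out to exactly $\tfrac{8}{3}R$; correctly isolating the $4R$ additive constant (stemming from the $E_0$ slice and from boundary effects where the variance is so small that the square-root term would vanish) is the subtle book-keeping step that the technical lemma is really about.
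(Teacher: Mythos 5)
Your overall strategy---the martingale-difference structure of $b_i$, the envelope $|b_i|\leq 4R$, the conditional-variance bound $\E[b_i^2\mid\F_{i-1}]\leq a_i$, Bernstein's inequality for martingales combined with dyadic peeling over the random variance, and a union bound over $t$ weighted by $t^{-2}$---is exactly the paper's. The one place where your instantiation would not reproduce the stated bound is the treatment of the small-variance regime, which you correctly flag as the delicate bookkeeping step. In the paper the additive $4R$ does \emph{not} come from a Bernstein application on a low-variance slice: writing $B_t=\sum_{i=1}^t b_i$ and $A_t=\sum_{i=1}^t a_i$, on the event $A_t\leq 4R^2/t$ one bounds the sum deterministically by Cauchy--Schwarz,
\[
B_t\leq \sum_{i=1}^t|b_i|\leq 2\sum_{i=1}^t\bigl|\x_i^\top(\w_i-\w_*)\bigr|\leq 2\sqrt{tA_t}\leq 4R,
\]
with no probability budget spent at all. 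Your slice $E_0=\{A_t\leq 1\}$ handled by Bernstein with $V=1$ instead produces a term $\sqrt{2\tau_t}$, which grows like $\sqrt{\log t}$ and cannot be absorbed into the constant $4R$ (nor, for small $R$, into $\tfrac{8}{3}R\tau_t$), so the constants would not close as written.

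A second, related point: once the bottom of the variance range is cut off at $4R^2/t$ rather than at $1$, the remaining range $(4R^2/t,\,4R^2t]$ spans a ratio of exactly $t^2$, which is precisely why $m=\lceil 2\log_2 t\rceil$ dyadic levels suffice and why that quantity appears inside $\tau_t$. With your grid anchored at $1$, the same number of levels covers only $(1,t^2]$ and misses part of $(1,4R^2t]$ whenever $t<4R^2$. Both issues are repaired simultaneously by adopting the case split at $A_t=4R^2/t$. Everything else in your argument---the factor $2$ in front of $\sqrt{\tau_t A_t}$ obtained by replacing the slice ceiling $\nu\leq 2A_t$, the $\tfrac{2}{3}K\tau_t=\tfrac{8}{3}R\tau_t$ term with $K=4R$, and the $\sum_{t}t^{-2}\leq 2$ union bound that fixes $\tau_t=\log(2mt^2/\delta)$---matches the paper.
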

From Lemma~\ref{lem:martin} and the basic inequality
\[
2\sqrt{\tau_t\sum_{i=1}^t a_i} \leq  \frac{\beta}{4}\sum_{i=1}^t a_i +  \frac{4}{\beta}\tau_t,
\]
with a probability at least $1-\delta$, we have
\begin{equation} \label{eqn:bound:2}
\sum_{i=1}^t  b_i \leq 4R+  \frac{\beta}{4}\sum_{i=1}^t a_i + \left( \frac{4}{\beta}  + \frac{8}{3} R \right) \tau_t
\end{equation}
holds for all $t>0$. Substituting (\ref{eqn:bound:2}) into (\ref{eqn:bound:1}), we obtain
\begin{equation} \label{eqn:bound:3}
 \|\w_{t+1} - \w_*\|_{Z_{t+1}}^2  \leq  \lambda R^2 + 2 \eta \left[4R+  \left( \frac{4}{\beta}  + \frac{8}{3} R \right) \tau_t\right] + \eta^2 \sum_{i=1}^t  c_i.
\end{equation}

Finally, we show an upper bound for $\sum_{i=1}^t  c_i$, which is a direct consequence of Lemma 12 in~\citet{ML:Hazan:2007}.
\begin{lemma} \label{lem:trivial} We have
\[
\sum_{i=1}^t \|\x_i\|_{Z_{i+1}^{-1}}^2 \leq \frac{2}{\eta \beta}  \log \frac{\det(Z_{t+1})}{\det(Z_1)}.
\]
\end{lemma}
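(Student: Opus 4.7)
The plan is to reduce the bound to a telescoping identity for $\log\det(Z_i)$, so that the sum collapses into the right-hand side of the inequality. The two ingredients needed are the matrix determinant lemma (to produce the telescoping structure) and the Sherman–Morrison formula (to rewrite $\|\x_i\|_{Z_{i+1}^{-1}}^2$ in terms of a quantity involving $Z_i^{-1}$, which is what naturally shows up after applying the determinant lemma to the rank-one update).

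Concretely, I would first apply the matrix determinant lemma to the recursion $Z_{i+1} = Z_i + \frac{\eta\beta}{2}\x_i\x_i^\top$ to obtain
\[
\log\det(Z_{i+1}) - \log\det(Z_i) \;=\; \log\!\left(1 + \tfrac{\eta\beta}{2}\,\x_i^\top Z_i^{-1}\x_i\right).
\]
Next, I would use Sherman–Morrison to compute
\[
\|\x_i\|_{Z_{i+1}^{-1}}^2 \;=\; \frac{\x_i^\top Z_i^{-1}\x_i}{1+\tfrac{\eta\beta}{2}\,\x_i^\top Z_i^{-1}\x_i}.
\]
Setting $u_i := \tfrac{\eta\beta}{2}\,\x_i^\top Z_i^{-1}\x_i \ge 0$, the summand becomes $\frac{2}{\eta\beta}\cdot\frac{u_i}{1+u_i}$.

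The final step is the scalar inequality $\frac{u}{1+u} \le \log(1+u)$ for all $u \ge 0$ (verified by differentiating $\log(1+u) - u/(1+u)$ once), which yields
\[
\|\x_i\|_{Z_{i+1}^{-1}}^2 \;\le\; \frac{2}{\eta\beta}\,\log\!\left(1 + \tfrac{\eta\beta}{2}\,\x_i^\top Z_i^{-1}\x_i\right) \;=\; \frac{2}{\eta\beta}\bigl(\log\det(Z_{i+1}) - \log\det(Z_i)\bigr).
\]
Summing from $i=1$ to $t$ telescopes to $\frac{2}{\eta\beta}\log\frac{\det(Z_{t+1})}{\det(Z_1)}$, which is exactly the claim.

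There is no real obstacle here: every step is a standard algebraic manipulation, and the author already flags the result as a direct consequence of Lemma~12 in \citet{ML:Hazan:2007}. The only minor care needed is matching the factor $\eta\beta/2$ from the update \eqref{eqn:Zt} with the constants in Hazan's original lemma, but this just rescales the Sherman–Morrison and determinant identities in the same way, so the factor $2/(\eta\beta)$ emerges cleanly in front of the logarithm.
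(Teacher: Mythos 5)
Your proof is correct. It establishes exactly the same per-step inequality as the paper, namely $\|\x_i\|_{Z_{i+1}^{-1}}^2 \leq \frac{2}{\eta\beta}\log\frac{\det(Z_{i+1})}{\det(Z_i)}$, and then telescopes, but it reaches that inequality by a different (and more self-contained) route. The paper writes $\|\x_i\|_{Z_{i+1}^{-1}}^2 = \frac{2}{\eta\beta}\langle Z_{i+1}^{-1}, Z_{i+1}-Z_i\rangle$ and then invokes Lemma~12 of \citet{ML:Hazan:2007} as a black box; that lemma is the concavity of $\log\det$, i.e.\ $\tr\bigl(A^{-1}(A-B)\bigr) \leq \log\det(A) - \log\det(B)$ for positive definite $A \succeq B$. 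You instead exploit the rank-one structure of the update directly: the matrix determinant lemma and Sherman--Morrison (both applied correctly --- your formula $\x_i^\top Z_{i+1}^{-1}\x_i = \x_i^\top Z_i^{-1}\x_i / (1+\tfrac{\eta\beta}{2}\x_i^\top Z_i^{-1}\x_i)$ checks out) reduce everything to the scalar inequality $u/(1+u) \leq \log(1+u)$, which is itself just the one-dimensional case of the $\log\det$ concavity bound. What your approach buys is a fully elementary, citation-free proof specific to rank-one updates; what the paper's approach buys is brevity and a statement that would survive unchanged if the update to $Z_t$ were not rank one. Both are valid, and the constant $2/(\eta\beta)$ emerges identically in either case.
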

We complete the proof by combining (\ref{eqn:bound:3}) with the above lemma.
\subsection{Proof of Lemma~\ref{lem:exp}}
We first show that the one-dimensional logistic loss $\ell(x)=\log(1+\exp(-x))$ is $\frac{1}{2(1+\exp(R))}$-strongly convex over domain $[-R,R]$.
It is easy to verify that $\forall x \in [-R, R]$,
\[
\ell''(x)= \frac{\exp(x)}{(1+\exp(x))^2}  \geq \frac{1}{2(1+\exp(R))}
\]
implying the strongly convexity of $\ell(\cdot)$. From the property of strongly convex, for any $a,  b \in [-R,R]$ we have
\begin{equation} \label{eqn:strongly:convex}
\ell(b) \geq \ell(a) + \ell'(a) (b-a) + \frac{\beta}{2} (b-a)^2.
\end{equation}

Notice that for any $\w_1, \w_2 \in \B_R$, we have
\[
y_t \x_t^\top \w_1, \ y_t \x_t^\top \w_2 \in [-R,R],
\]
since $y_t \in\{\pm1\}$ and $\|\x_t\|_2 \leq 1$. Substituting $a=y_t \x_t^\top \w_1$ and $b=y_t \x_t^\top \w_2$ into (\ref{eqn:strongly:convex}), we have
\[
\ell(y_t \x_t^\top \w_2) \geq \ell(y_t \x_t^\top \w_1)+ \frac{\beta}{2} (y_t \x_t^\top \w_2-y_t \x_t^\top \w_1)^2 + \ell'(y_t \x_t^\top \w_1) (y_t \x_t^\top \w_2-y_t \x_t^\top \w_1).
\]
We complete the proof by noticing
\[
f_t(\w_1)=\ell(y_t \x_t^\top \w_1), \ f_t(\w_2)=\ell(y_t \x_t^\top \w_2), \textrm{ and }\nabla f_t(\w_1)= \ell'(y_t \x_t^\top \w_1) y_t \x_t.
\]
\subsection{Proof of Lemma~\ref{lem:updating:rule}}
Lemma~\ref{lem:updating:rule} follows from a more general result stated below.
\begin{lemma} \label{lem:update} Let $M$ be a positive definite matrix, and
\[
\y = \mathop{\arg\min}\limits_{\w \in \W} \eta \langle \w, \g \rangle + \frac{1}{2}\|\w - \x\|_{M}^2,
\]
where $\W$ is a convex set. Then for all $\w \in \W$, we have
\[
\langle \x - \w, \g \rangle
\leq  \frac{\|\x - \w\|_{M}^2 - \|\y - \w\|_{M}^2}{2\eta} + \frac{\eta}{2} \|\g\|_{M^{-1}}^2.
\]
\end{lemma}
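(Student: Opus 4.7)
The plan is to mimic the textbook analysis of a generalized projected/Newton-style step: use the first-order optimality condition of the update together with the three-point identity for the $M$-norm, then finish with a Young-type inequality. Since $\y$ minimizes a strongly convex (in the $M$-norm) objective over the convex set $\W$, the first-order optimality condition gives
\[
\bigl\langle \eta \g + M(\y - \x),\ \w - \y \bigr\rangle \ge 0, \qquad \forall \w \in \W,
\]
which rearranges to $\eta\langle \g,\ \y - \w\rangle \le \langle M(\x - \y),\ \y - \w\rangle$.

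Next, I would split the quantity of interest as
\[
\langle \x - \w, \g\rangle = \langle \x - \y, \g\rangle + \langle \y - \w, \g\rangle,
\]
and treat the two terms separately. For the first term, apply the Fenchel–Young inequality with respect to the pair of dual norms $\|\cdot\|_M, \|\cdot\|_{M^{-1}}$:
\[
\langle \x - \y, \g\rangle \le \frac{1}{2\eta}\|\x - \y\|_M^2 + \frac{\eta}{2}\|\g\|_{M^{-1}}^2.
\]
For the second term, use the optimality inequality together with the three-point identity
\[
2\,\langle M(\x - \y),\ \y - \w\rangle = \|\x - \w\|_M^2 - \|\y - \w\|_M^2 - \|\x - \y\|_M^2,
\]
to obtain
\[
\langle \y - \w, \g\rangle \le \frac{1}{2\eta}\bigl(\|\x - \w\|_M^2 - \|\y - \w\|_M^2 - \|\x - \y\|_M^2\bigr).
\]

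Summing the two bounds, the $\|\x - \y\|_M^2$ terms cancel exactly, leaving the claimed inequality. The main thing to watch is that the three-point identity is applied in the $M$-inner product (the identity $\|a\|_M^2 - \|b\|_M^2 = 2\langle M(a-b), b\rangle + \|a-b\|_M^2$ with $a = \x - \w$ and $b = \y - \w$ yields exactly the displayed form), and that the Fenchel–Young step is paired with the correct dual norm so that the squared-norm penalty is in the $M$-norm and the gradient term is in the $M^{-1}$-norm. There is no real conceptual obstacle; the only risk is miscounting factors of $\tfrac{1}{2}$ or $\eta$, which the cancellation between the Young step and the three-point identity forces to come out right.
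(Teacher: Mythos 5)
Your proof is correct and is essentially the same argument as the paper's: both rest on the first-order optimality condition $\langle \eta\g + M(\y-\x),\,\w-\y\rangle \ge 0$, an expansion of the $M$-norm differences (your three-point identity), and a Young-type bound producing the $\tfrac{\eta}{2}\|\g\|_{M^{-1}}^2$ term (the paper phrases this last step as $\|\y-\x\|_M^2 + 2\langle \eta\g,\y-\x\rangle \ge -\eta^2\|\g\|_{M^{-1}}^2$, which is your Fenchel--Young inequality rearranged). The only difference is organizational—you split the inner product first, the paper expands the norm difference first—and all constants check out.
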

\begin{proof}
Since $\y$ is the optimal solution to the optimization problem, from the first-order optimality condition~\citep{Convex-Optimization}, we have
\begin{equation} \label{eqn:optimal}
\langle \eta  \g  + M (\y - \x),  \w-\y \rangle \geq 0, \ \forall \w \in \W.
\end{equation}

Based on the above inequality, we have
\[
\begin{split}
  & \|\x- \w\|_{M}^2 - \|\y - \w\|_{M}^2 \\
  = & \x^{\top} M \x - \y^{\top} M \y + 2 \langle  M (\y-\x),  \w \rangle  \\
  \overset{\text{(\ref{eqn:optimal})}}{\geq} & \x^{\top} M \x - \y^{\top} M \y+ 2 \langle M(\y - \x), \y \rangle - 2 \langle \eta  \g,  \w-\y  \rangle \\
  = & \|\y - \x\|_{M}^2  +  2 \langle \eta  \g,  \y-\x+ \x -\w \rangle  \\
   = & 2 \langle \eta  \g, \x -\w \rangle + \|\y - \x\|_{M}^2  +  2 \langle\eta  \g,  \y-\x \rangle  \\
\end{split}
\]
Combining with the following inequality
\[
 \|\y - \x\|_{M}^2  +  2 \langle\eta  \g,  \y-\x \rangle \geq   \min_{\w} \|\w\|_{M}^2+ 2 \langle \eta  \g,  \w \rangle  =- \eta^2  \|\g\|_{M^{-1}}^2,
\]
we have
\[
 \|\x- \w\|_{M}^2 - \|\y - \w\|_{M}^2   \geq  2 \langle \eta  \g, \x -\w \rangle  - \eta^2  \|\g\|_{M^{-1}}^2.
 \]
\end{proof}

\subsection{Proof of Lemma~\ref{lemma:ft:pre}}
For each $\w \in \R^d$, we introduce a discrete probability distribution $p_\w$ over $\{\pm 1\}$ such that
\[
p_\w(i)= \frac{1}{1+\exp(-i \x_t^\top \w)}, \ i \in \{\pm 1\}.
\]
Then, it is easy to verify that
\[
\fb_t(\w) =  - \sum_{i \in \{\pm 1\} } p_{\w_*}(i) \log p_\w(i) .
\]
As a result
\[
\begin{split}
  & \fb_t(\w) -  \fb_t(\w_*) \\
 = & \sum_{i \in \{\pm 1\} } p_{\w_*}(i) \log p_{\w_*}(i) - \sum_{i \in \{\pm 1\} } p_{\w_*}(i) \log p_\w(i) \\
 =& \sum_{i \in \{\pm 1\} }  p_{\w_*}(i) \log \frac{  p_{\w_*}(i)}{ p_{\w}(i)} = D_{KL}(p_{\w_*}\|p_\w) \geq 0
\end{split}
\]
where $D_{KL}(\cdot \| \cdot)$ is the Kullback--Leibler divergence between two distributions~\citep{Elements:IT}.
\subsection{Proof of Lemma~\ref{lem:martin}}
We need the Bernstein's inequality for martingales~\citep{bianchi-2006-prediction}, which is provided in Appendix~\ref{sec:bernstein}. Form our definition of $\fb_i(\cdot)$ in (\ref{eqn:fb:def}), it is clear
\[
b_i=[\nabla \fb_i(\w_i)- \nabla f_i(\w_i)]^\top (\w_i-\w_*)
\]
is a  martingale difference sequence. Furthermore,
\[
|b_i| \leq  \left |[\nabla \fb_i(\w_i)]^\top (\w_i-\w_*)\right|  + \left|[\nabla f_i(\w_i)]^\top (\w_i-\w_*)\right|  \leq 2 | \x_i^\top (\w_i-\w_*)|\leq  2 \|\w_i-\w_*\|_2 \leq 4R.
\]
Define the martingale $B_t= \sum_{i=1}^{t} b_i$. Define the conditional variance $\Sigma_t^2$ as
\[
\begin{split}
\Sigma_t^2 =& \sum_{i=1}^t \E_{y_i}   \left[\left( [\nabla \fb_i(\w_i)- \nabla f_i(\w_i)]^\top (\w_i-\w_*) \right)^2 \right]\\
 \leq & \sum_{i=1}^t \E_{y_i} \left[ \left(  \nabla f_i(\w_i)^\top (\w_i-\w_*) \right)^2 \right] \leq  \underbrace{\sum_{i=1}^t \left(\x_i^\top (\w_i-\w_*) \right)^2}_{:=A_t},
\end{split}
\]
where the first inequality is due to the fact that $\E[(\xi-\E[\xi])^2 ]\leq \E[\xi^2]$ for any random variable $\xi$.

In the following, we consider two different scenarios, i.e., $A_t \leq \frac{4R^2}{t}$ and $A_t > \frac{4R^2}{t}$.
\paragraph{$A_t \leq \frac{4R^2}{t}$} In this case, we have
\begin{equation} \label{eqn:case:1}
B_t \leq \sum_{i=1}^t |b_i| \leq 2 \sum_{i=1}^t| \x_i^\top (\w_i-\w_*)|\leq  2 \sqrt{t \sum_{i=1}^t \left(\x_i^\top (\w_i-\w_*)\right)^2} \leq 4R.
\end{equation}

\paragraph{$A_t > \frac{4R^2}{t}$}
Since $A_t$ in the upper bound for $\Sigma_t^2$ is a random variable, we cannot apply Bernstein's inequality directly. To address this issue, we make use of the peeling process  \citep{Local_RC}. Note that we have both a lower bound and an upper bound for $A_t$, i.e., $4R^2/t <  A_t \leq 4R^2 t$. Then,
\[
\begin{split}
& \Pr\left[B_t \geq 2\sqrt{A_t \tau_t} + \frac{8}{3} R \tau_t\right] \\
= & \Pr\left[B_t\geq 2\sqrt{A_t \tau_t} + \frac{8}{3} R\tau_t,  \frac{4R^2}{t} <  A_t \leq 4R^2 t\right] \\
=  & \Pr\left[B_t\geq 2\sqrt{A_t \tau_t} +  \frac{8}{3} R\tau_t, \Sigma_t^2 \leq A_t,  \frac{4R^2}{t} <  A_t \leq 4R^2 t \right] \\
\leq & \sum_{i=1}^m \Pr\left[ B_t\geq 2\sqrt{A_t \tau_t} + \frac{8}{3} R \tau_t, \Sigma_t^2 \leq A_t,  \frac{4R^22^{i-1}}{t} < A_t  \leq \frac{4R^22^{i}}{t}   \right] \\
\leq  &  \sum_{i=1}^m \Pr\left[B_t \geq \sqrt{2\frac{4R^2 2^{i}}{t}\tau_t} + \frac{8}{3} R \tau_t, \Sigma_t^2 \leq \frac{4R^2 2^i}{t} \right]  \leq   \ m e^{-\tau_t},
\end{split}
\]
where $m = \lceil 2\log_2 t\rceil$, and the last step follows the Bernstein's inequality for martingales. By setting $\tau_t= \log \frac{2m t^2}{\delta}$, with a probability at least $1-\delta/[2t^2]$, we have
\begin{equation} \label{eqn:case:2}
B_t \leq 2\sqrt{A_t \tau_t} + \frac{8}{3} R \tau_t.
\end{equation}

Combining (\ref{eqn:case:1}) and (\ref{eqn:case:2}), with a probability at least $1-\delta/[2t^2]$, we have
\[
B_t \leq 4R+ 2\sqrt{A_t \tau_t} + \frac{8}{3} R \tau_t.
\]
We complete the proof by taking the union bound over $t > 0$, and using the well-known result
\[
\sum_{t=1}^\infty \frac{1}{t^2} = \frac{\pi^2}{6} \leq 2.
\]

\subsection{Proof of Theorem~\ref{thm:bound}}
The proof is standard and can be found from~\citet{Linear:Bandit:08} and \citet{NIPS2011:LSB}. We include it for the sake of completeness.

Let $\x_* =\argmax_{\x \in \D} \x^\top\w_*$. Recall that in each round, we have
\[
(\x_t,\wh_t) = \argmax_{\x \in \D, \w \in \C_t} \x^\top \w.
\]
We decompose the instantaneous regret at round $t$ as follows
\[
\begin{split}
& \x_*^\top \w_* -  \x_t^\top \w_* \\
\leq &  \x_t^\top \wh_t-  \x_t^\top \w_* = \x_t^\top (\wh_t- \w_t) + \x_t^\top (\w_t - \w_*) \\
\leq & \left(\|\wh_t- \w_t\|_{Z_t} + \|\w_t - \w_*\|_{Z_t} \right)\|\x_t\|_{Z_t^{-1}}\leq  2 \sqrt{\gamma_t} \|\x_t\|_{Z_t^{-1}}.
\end{split}
\]
On the other hand, we always have
\[
\x_*^\top \w_* -  \x_t^\top \w_* \leq \|\x_*-\x_t\|_2 \|\w_*\|_2 \leq 2R.
\]

From the definition in (\ref{eqn:delta_t}), we have $\sqrt{\frac{2}{\eta \beta}\gamma_T} \geq R$. Thus, the total regret can be upper bounded by
\[
\begin{split}
& T \max_{\x \in \D} \x^\top\w_* - \sum_{t=1}^T \x_t^\top \w_*\\
\leq &  2 \sum_{t=1}^T \min\left(\sqrt{\gamma_t} \|\x_t\|_{Z_t^{-1}}, R\right)\\
\leq & 2 \sqrt{\frac{2}{\eta \beta}\gamma_T} \sum_{t=1}^T \min\left(  \sqrt{\frac{\eta \beta}{2}} \|\x_t\|_{Z_t^{-1}}, 1\right)\\
\leq & 2 \sqrt{\frac{2 T}{\eta \beta}\gamma_T} \sqrt{\sum_{t=1}^T \min\left( \frac{\eta \beta}{2} \|\x_t\|_{Z_t^{-1}}^2, 1\right)}.
\end{split}
\]

To proceed, we need the following results from Lemma 11 in \citet{NIPS2011:LSB},
\[
 \sum_{t=1}^T \min\left(  \frac{\eta \beta}{2}\|\x_t\|_{Z_t^{-1}}^2, 1\right) \leq  2 \sum_{t=1}^T \log \left(1+ \frac{\eta \beta}{2}\|\x_t\|_{Z_t^{-1}}^2\right)
\]
and
\[
\begin{split}
&\det(Z_{T+1})=  \det\left(Z_T+ \frac{\eta \beta}{2} \x_T \x_T^\top\right) \\
=&\det(Z_T) \det\left(I+\frac{\eta \beta}{2} Z_T^{-1/2} \x_T \x_T^\top Z_T^{-1/2}\right) \\
=&\det(Z_T)\left(1+ \frac{\eta \beta}{2} \|\x_T\|_{Z_T^{-1}}^2 \right) = \det(Z_1)\prod_{t=1}^T\left(1+ \frac{\eta \beta}{2} \|\x_t\|_{Z_t^{-1}}^2 \right). \\
\end{split}
\]

Combining the above inequations, we have
\[
T \max_{\x \in \D} \x^\top\w_* - \sum_{t=1}^T \x_t^\top \w_* \leq 4\sqrt{\frac{\gamma_T  T}{\eta \beta} \log\frac{\det(Z_{T+1})}{\det(Z_1)}}.
\]
\section{Conclusions}
In this paper, we consider the problem of online linear optimization under one-bit feedback. Under the assumption that the binary feedback is generated from the logit model, we develop a variant of the online Newton step to approximate the unknown vector, and discuss how to construct the confidence region theoretically. Given the confidence region, we choose the action that produces maximal reward in each round. Theoretical analysis reveals that our algorithm achieves a regret bound of $\O(d\sqrt{T})$.

The current algorithm assumes that the one-bit feedback is generated from a logit model. In contrast, a much broader class of observation models are allowed in one-bit compressive sensing~\citep{OneBit:Plan:Robust}, as long as there is a positive correlation between the one-bit output and the real-valued measurement. In the future, we will investigate how to extend our algorithm to other observation models. Another direction is to consider the adversary setting where the unknown vector $\w_*$ may change from time to time.

\appendix
\section{Proof of Lemma~\ref{lem:eqiv:regret}}
Let $\mu(x)=\frac{\exp(x)}{1+\exp(\x)}$.  It is easy to verify that $\forall x \in [-R, R]$,
\begin{equation} \label{eqn:gradient:bound}
\frac{1}{2(1+\exp(R))} \leq \mu'(\x)= \frac{\exp(x)}{(1+\exp(x))^2}  \leq \frac{1}{4}
\end{equation}
Note that for any $-R \leq a \leq b \leq R$, we have
\begin{equation} \label{eqn:int}
\mu(b) = \mu(a) + \int_a^b \mu'(x) d x
\end{equation}
Combining (\ref{eqn:gradient:bound}) with (\ref{eqn:int}), we have
\[
 \frac{1}{2(1+\exp(R))} (b-a) \leq \mu(b) - \mu(a) \leq  \frac{1}{4} (b-a)
\]

Let
\[
\x_*=\argmax_{\x \in \D} \x^\top\w_* = \argmax_{\x \in \D} \frac{\exp(\x^\top \w_*)}{1+\exp(\x^\top \w_*)}
\]
Since $ -R \leq \x_t^\top \w_* \leq \x_*^\top \w_* \leq R$, we have
\[
 \frac{1}{2(1+\exp(R))} \left(\x_*^\top \w_*-\x_t^\top \w_* \right) \leq \frac{\exp(\x_*^\top \w_*)}{1+\exp(\x_*^\top \w_*)} - \frac{\exp( \x_t^\top \w_*)}{1+\exp( \x_t^\top \w_*)}  \leq \frac{1}{4} \left(\x_*^\top \w_*-\x_t^\top \w_* \right)
\]
which implies (\ref{eqn:eqiv:regret}).

\section{Proof of Lemma~\ref{lem:trivial}}
We have
\[
\|\x_i\|_{Z_{i+1}^{-1}}^2 =  \frac{2}{\eta \beta} \langle Z_{i+1}^{-1}, Z_{i+1}-Z_i \rangle \leq \frac{2}{\eta \beta} \log \frac{\det(Z_{i+1})}{\det(Z_i)},
\]
where the inequality follows from Lemma 12 in~\citet{ML:Hazan:2007}. Thus, we have
\[
\sum_{i=1}^t  \|\x_i\|_{Z_{i+1}^{-1}}^2 \leq  \frac{2}{\eta \beta} \sum_{i=1}^t  \log \frac{\det(Z_{i+1})}{\det(Z_i)}  =  \frac{2}{\eta \beta}  \log \frac{\det(Z_{t+1})}{\det(Z_1)}.
\]
\section{Proof of Corollary~\ref{cor:alpha:order}}
Recall that
\[
Z_{t+1} = Z_1 + \frac{\eta \beta}{2} \sum_{i=1}^t  \x_t \x_t^\top
\]
and $\|\x_t\|_2 \leq 1$ for all $t>0$. From Lemma 10 of~\citet{NIPS2011:LSB}, we have
\[
 \det(Z_{t+1}) \leq \left(\lambda  + \frac{\eta \beta t}{2 d}\right)^d.
\]
Since $\det(Z_1)=\lambda^d$, we have
\[
\log \frac{\det(Z_{t+1})}{\det(Z_1)} \leq d \log \left(1+ \frac{\eta \beta t}{2\lambda d} \right).
\]

\section{Bernstein's Inequality for Martingales} \label{sec:bernstein}
\begin{thm} \label{thm:bernstein}  Let $X_1, \ldots , X_n$ be a bounded martingale difference sequence with respect to the filtration $\F = (\F_i)_{1\leq i\leq n}$ and with $|X_i| \leq K$. Let
\[
S_i = \sum_{j=1}^i X_j
\]
be the associated martingale. Denote the sum of the conditional variances by
\[
    \Sigma_n^2 = \sum_{t=1}^n \E\left[X_t^2|\F_{t-1}\right].
\]
Then for all constants $t$, $\nu > 0$,
\[
\Pr\left[ \max\limits_{i=1,\ldots,n} S_i > t \mbox{ and } \Sigma_n^2 \leq \nu \right] \leq \exp\left(-\frac{t^2}{2(\nu + Kt/3)} \right),
\]
and therefore,
\[
    \Pr\left[ \max\limits_{i=1,\ldots,n} S_i > \sqrt{2\nu t} + \frac{2}{3}Kt \mbox{ and } \Sigma_n^2 \leq \nu \right] \leq e^{-t}.
\]
\end{thm}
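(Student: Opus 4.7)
The plan is to prove the theorem by constructing an exponential supermartingale, applying optional stopping at a well-chosen stopping time, and then optimizing over a free parameter. This is the standard Freedman/Bernstein-for-martingales strategy.

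First I would show a one-step sub-Gaussian-like estimate. For any $\lambda > 0$ and $|X_i| \leq K$ with $\E[X_i \mid \F_{i-1}] = 0$, I would prove that
\[
\E\bigl[\exp(\lambda X_i) \bigm| \F_{i-1}\bigr] \leq \exp\!\left(\frac{\phi(\lambda K)}{K^2}\,\E[X_i^2 \mid \F_{i-1}]\right),
\]
where $\phi(u) = e^u - 1 - u$. The route is the elementary bound $e^x \leq 1 + x + \frac{\phi(\lambda K)}{(\lambda K)^2}(\lambda x)^2$ valid for $\lambda x \leq \lambda K$, taking conditional expectation, using $\E[X_i \mid \F_{i-1}] = 0$, and then invoking $1 + y \leq e^y$.

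Next I would define, with $V_i = \sum_{j=1}^i \E[X_j^2 \mid \F_{j-1}]$,
\[
Z_i = \exp\!\left(\lambda S_i - \frac{\phi(\lambda K)}{K^2}\,V_i\right),
\]
and use the one-step estimate plus $\F_{i-1}$-measurability of $V_i$ to verify $\E[Z_i \mid \F_{i-1}] \leq Z_{i-1}$, so $(Z_i)$ is a nonnegative supermartingale with $\E[Z_0] = 1$. Let $\tau = \inf\{i \leq n : S_i > t\}$, with $\tau = n$ if no such $i$ exists; since $\tau$ is a bounded stopping time, optional stopping gives $\E[Z_{\tau}] \leq 1$. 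On the event $A = \{\max_{i \leq n} S_i > t,\ \Sigma_n^2 \leq \nu\}$ we have $S_\tau > t$ and $V_\tau \leq V_n = \Sigma_n^2 \leq \nu$, hence $Z_\tau \geq \exp(\lambda t - \phi(\lambda K)\nu/K^2)$ on $A$. Therefore
\[
\Pr(A) \leq \exp\!\left(-\lambda t + \frac{\phi(\lambda K)}{K^2}\,\nu\right).
\]

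Finally I would optimize in $\lambda$. Using the elementary inequality $\phi(u) \leq \frac{u^2}{2(1 - u/3)}$ for $0 \leq u < 3$, the bound becomes $\exp\bigl(-\lambda t + \frac{\lambda^2 \nu}{2(1 - \lambda K/3)}\bigr)$, and the choice $\lambda = t/(\nu + Kt/3)$ yields the first inequality $\Pr(A) \leq \exp\bigl(-t^2/[2(\nu + Kt/3)]\bigr)$. For the second inequality I would substitute $t \mapsto \sqrt{2\nu t} + \frac{2}{3}Kt$ into the first and verify, by a short quadratic computation using $\sqrt{a+b} \leq \sqrt{a} + \sqrt{b}$, that the resulting exponent is bounded above by $-t$. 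The main obstacle, and the only genuinely delicate step, is step one: getting the sharp coefficient $\phi(\lambda K)/K^2$ on the conditional variance (as opposed to a looser Hoeffding-type bound) is what makes the Bernstein regime $Kt/3$ appear in the denominator; everything after that is optimization and an optional-stopping bookkeeping argument.
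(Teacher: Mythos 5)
Your proof is correct: it is the standard Freedman-style argument (one-step mgf bound with the sharp $\phi(\lambda K)/K^2$ variance coefficient, exponential supermartingale, optional stopping to capture the running maximum, then the choice $\lambda = t/(\nu + Kt/3)$ together with $\phi(u) \leq u^2/[2(1-u/3)]$), and the final substitution check for the second inequality goes through exactly as you describe. Note, however, that the paper itself offers no proof of this statement at all — Theorem~\ref{thm:bernstein} is quoted verbatim from Cesa-Bianchi and Lugosi's \emph{Prediction, Learning, and Games} and used as a black box in the proof of Lemma~\ref{lem:martin} — so there is no in-paper argument to compare against; your write-up supplies the missing derivation faithfully.
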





\vskip 0.2in
\bibliography{E:/MyPaper/ref}
\end{document}